\newtheorem{theorem}{Theorem}[section]
\newtheorem{lemma}[theorem]{Lemma}
\newtheorem{remark}[theorem]{Remark}
\newtheorem{definition}[theorem]{Definition}
\newcommand{\R}{\mathbb{R}}
\newcommand{\CC}{\mathbb{C}}
\newcommand{\N}{\mathbb{N}}
\newcommand{\supp}{\operatorname{supp}}
\newcommand{\PP}{\mathbb{P}}
\newcommand{\indicator}{\mathds{1}}
\newcommand{\avsum}{\mathop{\mathpalette\avsuminner\relax}\displaylimits}
\newcommand{\interior}[1]{{\kern0pt#1}^{\mathrm{o}}}
\newcommand\avsuminner[2]{%
  {\sbox0{$\m@th#1\sum$}%
   \vphantom{\usebox0}%
   \ooalign{%
     \hidewidth
     \smash{\vrule height\dimexpr\ht0+1pt\relax depth\dimexpr\dp0+1pt\relax}%
     \hidewidth\cr
     $\m@th#1\sum$\cr
   }%
  }%
}
\title{\centering Learning ReLU networks to high uniform\\ accuracy is intractable}
\author[$1$,$\ast$]{\textbf{Julius Berner}}
\author[$1$,$2$,$3$,$\ast$]{\textbf{Philipp Grohs}}
\author[$4$,$\ast$]{\textbf{Felix Voigtlaender}}
\affil[$1$]{Faculty of Mathematics, University of Vienna, Austria}
\affil[$2$]{Research Network Data Science @ Uni Vienna, University of Vienna, Austria}
\affil[$3$]{RICAM, Austrian Academy of Sciences, Austria}
\affil[$4$]{Mathematical Institute for Machine Learning and Data Science (MIDS), Catholic University of Eichstätt-Ingolstadt,
Germany}
\affil[$\ast$]{All authors contributed equally}
\begin{document}

\maketitle

\begin{abstract}
  Statistical learning theory provides bounds on the necessary number of training samples needed to reach a prescribed accuracy in a learning problem formulated over a given target class. This accuracy is typically measured in terms of a generalization error, that is, an expected value of a given loss function. However, for several applications --- for example in a security-critical context or for problems in the computational sciences --- accuracy in this sense is not sufficient. In such cases, one would like to have guarantees for high accuracy on every input value, that is, with respect to the uniform norm. In this paper we precisely quantify the number of training samples needed for any conceivable training algorithm to guarantee a given uniform accuracy on any learning problem formulated over target classes containing (or consisting of) ReLU neural networks of a prescribed architecture. We prove that, under very general assumptions, the minimal number of training samples for this task scales exponentially both in the depth and the input dimension of the network architecture.
\end{abstract}

\section{Introduction}
\label{sec:Introduction}

The basic goal of supervised learning is to determine a function\footnote{In what follows, the input domain $[0,1]^d$ could be replaced by more general domains (for example Lipschitz domains) without any change in the later results.
The unit cube $[0,1]^d$ is merely chosen for concreteness.}
$u:[0,1]^d\to \mathbb{R}$ from (possibly noisy) samples $(u(x_1),\dots , u(x_m))$.
As the function $u$ can take arbitrary values between these samples, this problem is, of course, not solvable without any further information on $u$.
In practice, one typically leverages domain knowledge to estimate the structure and regularity of $u$ a priori, for instance, in terms of symmetries, smoothness, or compositionality.
Such additional information can be encoded via a suitable \emph{target class} $U\subset C([0,1]^d)$ that $u$ is known to be a member of. 
We are interested in the optimal accuracy for reconstructing $u$ that can be achieved by any algorithm which utilizes $m$ point samples.
To make this mathematically precise, we assume that this accuracy is measured by a norm $\|\cdot\|_Y$ of a suitable Banach space $Y\supset U$.
Formally, an algorithm can thus be described by a map $A:U\to Y$ that can query the function $u$ at $m$ points $x_i$ and that outputs a function $A(u)$ with $A(u) \approx u$ (see Section~\ref{sec:IBC} for a precise definition that incorporates adaptivity and stochasticity).
We will be interested in \emph{upper and lower bounds} on the accuracy that can be reached by any such algorithm --- equivalently, we are interested in the minimal number $m$ of point samples needed for any algorithm to achieve a given accuracy $\varepsilon$ for every $u\in U$.
This $m$ would then establish a fundamental benchmark on the sample complexity (and the algorithmic complexity) of learning functions in $U$ to a given accuracy. 

The choice of the Banach space $Y$ --- in other words \emph{how we measure accuracy} --- is very crucial here. For example, 
statistical learning theory provides upper bounds on the optimal accuracy in terms of an expected loss, i.e., with respect to $Y=L^2([0,1]^d,\mathrm{d}\mathbb{P})$, where $\mathbb{P}$ is a (generally unknown) data generating distribution \citep{devroye2013probabilistic,shalev2014understanding,mohri2018foundations,kim2021fast}. This offers a powerful approach to ensure a small \emph{average} reconstruction error. However, there are many important scenarios where such bounds on the accuracy are not sufficient and one would like to obtain an approximation of $u$ that is close to $u$ not only on average, but that can be guaranteed to be close \emph{for every} $x\in [0,1]^d$. This includes several applications in the sciences, for example in the context of the numerical solution of partial differential equations \citep{raissi2019physics,han2018solving,richter2022robust}, any security-critical application, for example, facial ID authentication schemes~\citep{guo2019survey}, as well as any application with a \emph{distribution-shift}, i.e., where the data generating distribution is different from the distribution in which the accuracy is measured~\citep{quinonero2008dataset}.
Such applications can only be efficiently solved if there exists an efficient algorithm $A$ that achieves \emph{uniform accuracy}, i.e., a small error $\sup_{u\in U} \|u-A(u)\|_{L^\infty([0,1]^d)}$ with respect to the uniform norm given by $Y=L^\infty([0,1]^d)$, i.e., $\|f\|_{L^\infty([0,1]^d)} \coloneqq \mathrm{esssup}_{x \in [0,1]^d} |f(x)|$.

Inspired by recent successes of \emph{deep learning} across a plethora of tasks in machine learning \citep{lecun2015deep} and also increasingly the sciences \citep{jumper2021highly,pfau2020ab}, we will be particularly interested in the case where the target class $U$ consists of --- or contains --- realizations of (feed-forward) neural networks of a specific architecture\footnote{By \emph{architecture} we mean the number of layers $L$, as well as the number of neurons in each layer.}. Neural networks have been proven and observed to be extremely powerful in terms of their expressivity, that is, their ability to accurately approximate large classes of complicated functions with only relatively few parameters \citep{elbrachter2021deep,berner2022modern}. However, it has also been repeatedly observed that the \emph{training} of neural networks (e.g., fitting a neural network to data samples) to high \emph{uniform} accuracy presents a big challenge: conventional training algorithms (such as SGD and its variants) often find neural networks that perform well on average (meaning that they achieve a small generalization error), but there are typically some regions in the input space where the error is large~\citep{fiedler2023stable}; see Figure \ref{fig:unstableNN} for an illustrative example. This phenomenon has been systematically studied on an empirical level by \citet{adcock2021gap}. It is also at the heart of several observed instabilities in the training of deep neural networks, including \emph{adversarial examples} \citep{szegedy2013intriguing, goodfellow2015explaining} or so-called \emph{hallucinations} emerging in generative modeling, e.g., tomographic reconstructions  \citep{bhadra2021hallucinations} or machine translation \citep{muller2020domain}. 

\begin{figure}[t]
    \centering
    \includegraphics[width = \textwidth]{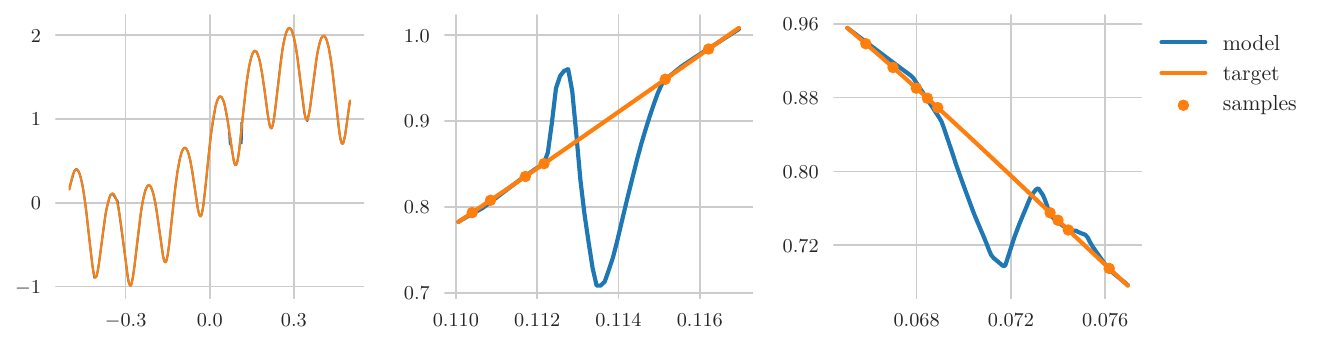}
    \caption{Even though the training of neural networks from data samples may achieve a small error \emph{on average}, there are typically regions in the input space where the pointwise error is large. The target function in this plot is given by $x \mapsto \log(\sin(50x) + 2) + \sin(5x)$~\citep[based on][]{adcock2021gap} and the model is a feed-forward neural network. It is trained on $m=1000$ uniformly distributed samples according to the hyperparameters in Tables~\ref{table:hp} and~\ref{table:hp_unstable} and achieves final $L^1$ and $L^\infty$ errors of $2.8\cdot 10^{-3}$ and $0.19$, respectively. The middle and right plots are zoomed versions of the left plot.}
    \label{fig:unstableNN}
\end{figure}

Note that additional knowledge on the target functions could potentially help circumvent these issues, see Remark~\ref{rem:prior_knowledge}. However, for many applications, it is not possible to precisely describe the regularity of the target functions. We thus analyze the case where no additional information is given besides the fact that one aims to recover a (unknown) neural network of a specified architecture and regularization from given samples -- 
i.e., we assume that $U$ contains a class of neural networks of a given architecture, subject to various regularization methods. This is satisfied in several applications of interest, e.g., \emph{model extraction attacks}~\citep{tramer2016stealing,he2022towards} and \emph{teacher-student} settings~\citep{mirzadeh2020improved,xie2020self}. It is also in line with standard settings in the \emph{statistical query literature}, in \emph{neural network identification}, and in statistical learning theory~\citep{anthony1999neural,mohri2018foundations}, see Section~\ref{sec:related}.

For such settings we can rigorously show that learning a class of neural networks is prone to instabilities. Specifically, any conceivable learning algorithm (in particular, any version of SGD), which recovers the neural network to high uniform accuracy, needs intractably many samples.

\begin{theorem}\label{thm:main_intro}
     Suppose that $U$ contains all neural networks with $d$-dimensional input, ReLU activation function, $L$ layers of width up to $3d$, and coefficients bounded by $c$ in the $\ell^q$ norm.
     Assume that there exists an algorithm that reconstructs all functions in $U$ to uniform accuracy $\varepsilon$ from $m$ point samples.
     Then, we have
    $$
       m\geq \left(\frac{\Omega}{32d}\right)^d \cdot \varepsilon^{-d},
       \quad \text{where} \quad
        \Omega
        \coloneqq \begin{cases}
            \frac{1}{8\cdot 3^{\nicefrac{2}{q}}} \cdot c^L \cdot d^{1-\frac{2}{q}}
            & \text{if } q\le 2 \\[0.2cm]
            \frac{1}{48} \cdot c^L \cdot (3d)^{(L-1)(1-\frac{2}{q})}
            & \text{if } q\geq 2 .
        \end{cases}
    $$
\end{theorem}

Theorem~\ref{thm:main_intro} is a special case of Theorem~\ref{thm:main} (covering $Y=L^p([0,1]^d)$ for all $p\in [1,\infty]$, as well as network architectures with arbitrary width) which will be stated and proven in Section~\ref{sub:MainResultProof}.

To give a concrete example, we consider the problem of learning neural networks with ReLU activation function, $L$ layers of width at most $3d$, and coefficients bounded by $c$ to uniform accuracy $\varepsilon = \nicefrac{1}{1024}$. According to our results we would need at least
$$
  m \geq 2^d \cdot c^{dL} \cdot (3d)^{d (L-2)}
$$
many samples --- \emph{the sample complexity thus depends exponentially on the input dimension $d$, the network width, and the network depth}, becoming intractable even for moderate values of $d,c,L$ (for $d=15$, $c=2$, and $L=7$, the sample size $m$ would already have to exceed the estimated number of atoms in our universe).
If, on the other hand, reconstruction only with respect to the $L^2$ norm were required, standard results in statistical learning theory \citep[see, for example,][]{berner2020analysis} show that $m$ only needs to depend polynomially on $d$.
We conclude that uniform reconstruction is vastly harder than reconstruction with respect to the $L^2$ norm and, in particular, intractable.
Our results are further corroborated by numerical experiments presented in Section~\ref{sec:num} below.

\begin{remark}
For other target classes $U$, uniform reconstruction is tractable
(i.e., the number of required samples for recovery does not massively exceed the
number of parameters defining the class).
A simple example are univariate polynomials of degree less than $m$
which can be exactly determined from $m$ samples.
One can show similar results for sparse multivariate polynomials using techniques from the field of compressed sensing~\citep{rauhut2007random}.
Further, one can show that approximation rates in suitable reproducing kernel Hilbert spaces with bounded kernel can be realized using point samples with respect to the uniform norm~\citep{pozharska2022note}.
Our results uncover an opposing behavior of neural network classes:
There exist functions that can be arbitrarily well approximated (in fact, exactly represented)
by small neural networks, but these representations cannot be inferred from samples.
Our results are thus highly specific to classes of neural networks.
\end{remark}

\begin{remark}
\label{rem:prior_knowledge}
    Our results do not rule out the possibility that there exist training algorithms for neural networks that achieve high accuracy on some restricted class of target functions, if the knowledge about the target class can be incorporated into the algorithm design. For example, if it were known that the target function can be efficiently approximated by polynomials one could first compute an approximating polynomial (using polynomial regression which is tractable) and then represent the approximating polynomial by a neural network. The resulting numerical problem would however be very different from the way deep learning is used in practice, since most neural network coefficients (namely those corresponding to the approximating polynomial) would be fixed a priori. Our results apply to the situation where such additional information on the target class $U$ is not available and no problem specific knowledge is incorporated into the algorithm design besides the network architecture and regularization procedure.
\end{remark}

We also complement the lower bounds of Theorem~\ref{thm:main_intro} with corresponding upper bounds.

\begin{theorem}
    \label{thm:upper_bound_intro}
    Suppose that $U$ consists of all neural networks with $d$-dimensional input, ReLU activation function, $L$ layers of width at most $B$, and coefficients bounded by $c$ in the $\ell^q$ norm.
    Then, there exists an algorithm that reconstructs all functions in $U$ to uniform accuracy $\varepsilon$ from $m$ point samples with
    \begin{equation*}
           m
           \le C^d \cdot \varepsilon^{-d}, \quad \text{where} \quad C\coloneqq \begin{cases}
                 \sqrt{d}\cdot c^L & \text{if } q\le 2 \\
                d^{1-\frac{1}{q}}\cdot c^L\cdot B^{(L-1)(1-\frac{2}{q})}
                 & \text{if } q\geq 2.
            \end{cases}
    \end{equation*}
\end{theorem}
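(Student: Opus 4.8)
The plan is to observe that the entire target class $U$ sits inside a single ball of Lipschitz functions, and that Lipschitz functions on the cube can be reconstructed from a grid of samples by nearest-neighbour interpolation.

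\emph{Step 1 (reduction to a Lipschitz ball).} I would first bound the Lipschitz constant of an arbitrary $\Phi\in U$. Writing $\Phi(x)=W_L\,\sigma(W_{L-1}\sigma(\cdots\sigma(W_1x+b_1)\cdots)+b_{L-1})+b_L$ with $W_i\in\R^{N_i\times N_{i-1}}$ (where $N_0=d$ and $N_L=1$) and $\sigma=\mathrm{ReLU}$ acting coordinatewise, the $1$-Lipschitzness of $\sigma$ and an induction over the layers give $\operatorname{Lip}(\Phi)\le\prod_{i=1}^{L}\|W_i\|_{2\to2}$. I then estimate $\|W_i\|_{2\to2}\le\|W_i\|_F=\|W_i\|_{\ell^2}$, and use that the entries of $W_i$ form a subvector of the full coefficient vector, so $\|W_i\|_{\ell^q}\le c$. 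For $q\le2$, monotonicity of $\ell^q$-norms gives $\|W_i\|_{\ell^2}\le\|W_i\|_{\ell^q}\le c$, hence $\operatorname{Lip}(\Phi)\le c^L$; for $q\ge2$, Hölder's inequality over the $N_{i-1}N_i$ entries of $W_i$ gives $\|W_i\|_{\ell^2}\le(N_{i-1}N_i)^{\frac12-\frac1q}\,c$, and multiplying over $i$ together with the telescoping identity $\prod_{i=1}^{L}N_{i-1}N_i=d\,(N_1\cdots N_{L-1})^2$ yields $\operatorname{Lip}(\Phi)\le c^L\,(\sqrt d\,N_1\cdots N_{L-1})^{1-\frac2q}$. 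Denoting the resulting bound by $\lambda$, we obtain $U\subseteq\{f\in C([0,1]^d):\operatorname{Lip}(f)\le\lambda\}$.

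\emph{Step 2 (the algorithm and its analysis).} I would use the deterministic, non-adaptive algorithm that puts $n:=\lfloor m^{1/d}\rfloor$, samples $u$ at the $n^d\le m$ centres of the cells of the uniform mesh of width $1/n$ on $[0,1]^d$, and returns the piecewise-constant nearest-neighbour interpolant $A(u)$ of these values; note this requires no knowledge of the architecture of $u$. For every $x\in[0,1]^d$ the closest sample point $g(x)$ satisfies $\|x-g(x)\|_2\le\frac{\sqrt d}{2n}$, so for $u\in U$ Step 1 gives $\|u-A(u)\|_{L^\infty}\le\lambda\,\frac{\sqrt d}{2n}$. Since $n\ge\tfrac12 m^{1/d}$ for $m\ge1$, this is at most $\sqrt d\,\lambda\, m^{-1/d}\le 2\sqrt d\,\lambda\,m^{-1/d}$, which is precisely the claimed estimate once the two values of $\lambda$ are substituted.

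\emph{Expected main obstacle.} The argument is essentially bookkeeping rather than conceptually hard; the only delicate points are the dimension-counting Hölder step combined with the telescoping product, which must be carried out carefully to produce exactly the factor $(\sqrt d\,N_1\cdots N_{L-1})^{1-2/q}$, and the handling of the case $m^{1/d}\notin\N$ (absorbed by the floor at the cost of replacing the natural constant $\tfrac12$ by $2$). Using piecewise multilinear interpolation instead of piecewise-constant interpolation would slightly sharpen the constant, but is not needed here.
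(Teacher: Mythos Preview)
Your proposal is correct and follows essentially the same two-step approach as the paper: bound the $\ell^2$-Lipschitz constant of every network in $U$ (the paper's Lemma~\ref{lem:LipschitzContinuityForNN}), then reconstruct by piecewise-constant interpolation on a uniform grid of $\lfloor m^{1/d}\rfloor^d$ points (the paper's Lemma~\ref{lem:LipschitzRecovery}). The only notable difference is that for $q\ge 2$ you obtain $\|W_i\|_{2\to 2}\le (N_{i-1}N_i)^{1/2-1/q}\,c$ via H\"older, whereas the paper proves the same bound via Riesz--Thorin interpolation; your argument is the more elementary of the two.
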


Theorem~\ref{thm:upper_bound_intro} follows from  Theorem~\ref{thm:upper_bound} that will be stated in Section~\ref{sub:ProofUpperBound}. We refer to Remark \ref{rem:upper_vs_lower_bound} for a discussion of the gap between the upper and lower bounds.

\begin{remark}
Our setting allows for an algorithm to choose the sample points $(x_1,\dots , x_m)$ in an adaptive way for each $u\in U$; see Section~\ref{sec:IBC} for a precise definition of the class of adaptive (possibly randomized) algorithms.
This implies that even a very clever sampling strategy (as would be employed in active learning) cannot break the bounds established in this paper. 
\end{remark}

\begin{remark}
Our results also shed light on the impact of different regularization methods. While picking a stronger regularizer (e.g., a small value of $q$) yields quantitative improvements (in the sense of a smaller $\Omega$), the sample size $m$ required for approximation in $L^\infty$ can still increase exponentially with the input dimension $d$. However, this scaling is only visible for very small $\varepsilon$. 
\end{remark}

\subsection{Related Work}
\label{sec:related}
Several other works have established \enquote{hardness} results for neural network training. For example, the seminal works by \citet{blum1992training,VuInfeasabilityOfTrainingNN} show that for certain architectures the training process can be $\mathrm{NP}$-complete. By contrast, our results do not directly consider algorithm runtime at all; our results are stronger in the sense of showing that even if it were possible to efficiently learn a neural network from samples, the necessary number of data points would be too large to be tractable. 

We also want to mention a series of hardness results in the setting of \emph{statistical query} (SQ) algorithms, see, e.g., \citet{ChenHardnessOfNoiseFreeLearning,NearOptimalStatisticalQueryBounds,GoelStatisticalQueryFunctionalGradients, StatisticalQueriesSurvey,SongStatisticalQueriesComplexityOfLearningNN}. For instance, \cite{ChenHardnessOfNoiseFreeLearning} shows that any SQ algorithm capable of learning ReLU networks with two hidden layers and width $\mathrm{poly}(d)$ up to $L^2$ error $1/\mathrm{poly}(d)$ must use a number of samples that scales \emph{superpolynomially} in $d$, or must use SQ queries with tolerance smaller than the reciprocal of any polynomial in $d$. In such SQ algorithms, the learner has access to an oracle that produces approximations (potentially corrupted by \emph{adversarial noise}) of certain expectations $\mathbb{E}[h(X,u(X))]$, where $u$ is the unknown function to be learned, $X$ is a random variable representing the data, and $h$ is a function chosen by the learner (potentially subject to some restrictions, e.g. Lipschitz continuity).
The possibility of the oracle to inject adversarial (instead of just stochastic) noise into the learning procedure --- which does not entirely reflect the typical mathematical formulation of learning problems --- is crucial for several of these results.
We also mention that due to this possibility of adversarial noise, not every gradient-based optimization method (for instance, SGD) is strictly speaking an SQ algorithm; see also the works by \citet[Page~3]{GoelSuperpolynomialLowerBounds}
and \citet{PowerOfDifferentiableLearning} for a more detailed discussion.

There also exist hardness results for learning algorithms based on \emph{label queries} (i.e., noise-free point samples),
which constitutes a setting similar to ours. 
More precisely, \citet{ChenHardnessOfNoiseFreeLearning} show that ReLU neural networks with constant depth and polynomial size constraints are not efficiently learnable up to a small squared loss with respect to a Gaussian distribution. However, the existing hardness results are in terms of \emph{runtime} of the algorithm and are contingent on several (difficult and unproven) conjectures from the area of cryptography (the decisional Diffie-Hellmann assumption or the \enquote{Learning with Errors} assumption);
the correctness of these conjectures in particular would imply that $\mathrm{P} \neq \mathrm{NP}$.
By contrast, our results are completely free of such assumptions
and show that the considered problem is \emph{information-theoretically} hard, not just computationally.

As already hinted at in the introduction, our results further extend the broad literature on statistical learning theory~\citep{anthony1999neural,vapnik1999overview, cucker2002mathematical, bousquet2003introduction, vapnik2013nature,mohri2018foundations}. Specifically, we provide fully explicit upper and lower bounds on the \emph{sample complexity} of (regularized) neural network \emph{hypothesis classes}. In the context of PAC learning, we analyze the realizable case, where the target function is contained in the hypothesis class~\citep[Theorem 3.20]{mohri2018foundations}. Contrary to standard results, we do not pose any assumptions, such as IID, on the data distribution, and even allow for adaptive sampling. Moreover, we analyze the complexity for all $L^p$ norms with $p\in [1,\infty]$, whereas classical results mostly deal with the squared loss.
As an example of such classical results, we mention that (bounded) hypothesis classes with finite pseudodimension $D$ can be learned to squared $L^2$ loss $\epsilon$ with $\mathcal{O}(D \epsilon^{-2})$ point samples; see e.g.,~\citet[Theorem~11.8]{mohri2018foundations}. Bounds for the pseudodimension of neural networks are readily available in the literature; see e.g.,~\cite{BartlettVCDimensionBounds}. These bounds imply that learning ReLU networks in $L^2$ is tractable, in contrast to the $L^\infty$ setting.

Another related area is the identification of (equivalence classes of) neural network parameters from their input-output maps. While most works focus on scenarios where one has access to an infinite number of queries~\citep{fefferman1993recovering,vlavcic2022neural}, there are recent results employing only finitely many samples~\citep{rolnick2020reverse,fiedler2023stable}. Robust identification of the neural network parameters is sufficient to guarantee uniform accuracy, but it is not a necessary condition. Specifically, proximity of input-output maps does not necessarily imply proximity of corresponding neural network parameters~\citep{berner2019degenerate}. More generally, our results show that efficient identification from samples cannot be possible unless (as done in the previously mentioned works) further prior information is incorporated. In the same spirit, this restricts the applicability of model extraction attacks, such as \emph{model inversion} or \emph{evasion attacks}~\citep{tramer2016stealing, he2022towards}.

Our results are most closely related to recent results by \citet{grohs2021proof} where target classes consisting of neural network approximation spaces are considered.
The results of \citet{grohs2021proof}, however, are purely asymptotic.
Since the asymptotic behavior incurred by the rate is often only visible for very fine accuracies, the results of \citet{grohs2021proof} cannot be applied to obtain concrete lower bounds on the required sample size.
Our results are completely explicit in all parameters and readily yield practically relevant bounds. They also elucidate the role of adaptive sampling and different regularization methods.

\subsection{Notation}
\label{sec:notation}

For $d\in \mathbb{N}$, we denote by $C([0,1]^d)$ the space of continuous functions $f\colon [0,1]^d \to \R$.
For a finite set $I$ and $(a_i)_{i\in I}\in \mathbb{R}^I$, we write $\avsum_{i\in I}a_i\coloneqq\frac{1}{|I|}\sum_{i\in I}a_i$.
For $m\in \mathbb{N}$, we write ${[m]\coloneqq\{1,\dots , m\}}$.
For $A \subset \mathbb{R}^d$, we denote by $\interior{A}$ the set of interior points of $A$.
For any subset $A$ of a vector space $V$, any $c \in \R$, and any $y \in V$, we further define $y+c\cdot A \coloneqq \{ y + c a \colon a \in A \}$.
For a matrix $W \in \R^{n \times k}$ and $q \in [1,\infty)$, we write $\|W\|_{\ell^q} \coloneqq \big( \sum_{i,j} |W_{i,j}|^q \big)^{1/q}$, and for $q = \infty$ we write $\|W\|_{\ell^\infty} \coloneqq \max_{i,j} |W_{i,j}|$.
For vectors $b \in \R^n$, we use the analogously defined notation $\|b\|_{\ell^q}$.

\section{Main Results}
\label{sec:MainResults}

This section contains our main theoretical results. We introduce the considered classes of algorithms in Section \ref{sec:IBC} and target classes in Section \ref{sec:NNDef}. 
Our main lower and upper bounds are formulated and proven in Section \ref{sub:MainResultProof} and Section \ref{sub:ProofUpperBound}, respectively.

\subsection{Adaptive (randomized) algorithms based on point samples}
\label{sec:IBC}

As described in the introduction, our goal is to analyze how well one can recover an unknown
function $u$ from a target class $U$ in a Banach space $Y$ based on point samples.
This is one of the main problems in \emph{information-based complexity} \citep{traub2003information},
and in this section we briefly recall the most important related notions.

Given $U \subset C([0,1]^d) \cap Y$ for a Banach space $Y$, we say that a map $A:U\to Y$ is an \emph{adaptive deterministic method using $m\in \mathbb{N}$ point samples} if there are $f_1\in [0,1]^d$ and mappings 
\begin{equation*} 
    f_i:\left([0,1]^{d}\right)^{i-1}\times \mathbb{R}^{i-1}\to [0,1]^d, \quad i = 2,\dots , m,
\quad 
\mathrm{and}
\quad 
    Q: \left([0,1]^{d}\right)^m\times \mathbb{R}^m \to Y
\end{equation*}
such that for every $u\in U$, using the point sequence $\mathbf{x}(u)= (x_1,\dots , x_m)\subset [0,1]^d$ defined as
\begin{equation}\label{eq:points}
    x_1 = f_1,\quad x_i = f_i(x_1,\dots , x_{i-1},u(x_1),\dots , u(x_{i-1})),\quad i = 2,\dots , m ,
\end{equation}
the map $A$ is of the form $ A(u) = Q(x_1,\dots , x_m , u(x_1),\dots , u(x_m)) \in Y$.

The set of all deterministic methods using $m$ point samples is denoted by $\mathrm{Alg}_m(U,Y)$.
In addition to such deterministic methods, we also study randomized methods defined as follows:
A tuple $(\mathbf{A},\mathbf{m})$ is called an \emph{adaptive random method using $m\in \mathbb{N}$ point samples on average}
if $\mathbf{A} = (A_\omega)_{\omega \in \Omega}$ where $(\Omega,\mathcal{F},\PP)$ is a probability space,
and where $\mathbf{m}:\Omega \to \mathbb{N}$ is such that the following conditions hold:
\begin{enumerate}
    \item $\mathbf{m}$ is measurable, and $\mathbb{E}[\mathbf{m}]\le m$;
    \item $\forall \, u\in U: \ \omega \mapsto A_\omega(u)$ is measurable with respect to the Borel $\sigma$-algebra on $Y$;
    \item $\forall \, \omega \in \Omega:\ A_\omega \in \mathrm{Alg}_{\mathbf{m}(\omega)}(U,Y)$.
\end{enumerate}
The set of all random methods using $m$ point samples on average will be denoted by $\mathrm{Alg}^{MC}_m(U,Y)$, since such methods are sometimes called
\emph{Monte-Carlo} (MC) algorithms.

For a target class $U$, we define the \emph{optimal (randomized) error} as
\begin{equation}
\label{eq:mc_err}
    \mathrm{err}_m^{MC}(U,Y)
    \coloneqq \inf_{(\mathbf{A},\mathbf{m})\in \mathrm{Alg}^{MC}_m(U,Y)} \,\,
          \sup_{u\in U} \,\,
            \mathbb{E}\left[\|u - A_\omega(u)\|_Y\right]. 
\end{equation}

We note that $\mathrm{Alg}_m (U,Y) \subset \mathrm{Alg}_m^{MC}(U,Y)$,
since each deterministic method can be interpreted as a randomized
method over a trivial probability space.

\subsection{Neural network classes}
\label{sec:NNDef}

We will be concerned with target classes related to ReLU neural networks.
These will be defined in the present subsection. 
Let $\varrho : \R \to \R$, $\varrho(x) = \max \{0, x\}$, be the \emph{ReLU activation function}.
Given a \emph{depth} $L\in \mathbb{N}$, an \emph{architecture} $(N_0,N_1,\dots , N_L)\in \mathbb{N}^{L+1}$, and \emph{neural network coefficients}
\begin{equation*}
\textstyle
\Phi = \left((W^i,b^i)\right)_{i=1}^L
  \in \bigtimes_{i=1}^L \left( \mathbb{R}^{N_i\times N_{i-1}}\times \mathbb{R}^{N_i}\right) ,
\end{equation*}
we define their \emph{realization} $R(\Phi)\in C(\mathbb{R}^{N_0},\mathbb{R}^{N_L})$ as
\begin{align*} 
    R(\Phi) \coloneqq \phi^L \circ \varrho \circ \phi^{L-1} \circ \dots \circ  \varrho \circ \phi^1
\end{align*}
where $\varrho$ is applied componentwise and $\phi^i \colon \R^{N_{i-1}} \to \R^{N_i}$, $x \mapsto W^i x + b^i$, for $i\in [L]$.
Given $c > 0$ and $q \in [1,\infty]$, define the class
\begin{equation*}
    \textstyle
    \mathcal{H}_{(N_0,\dots , N_L),c}^q
    \coloneqq \left\{
         R(\Phi)
         :\,
         \Phi \in \bigtimes_{i=1}^L
                  \left( \mathbb{R}^{N_i\times N_{i-1}}\times \mathbb{R}^{N_i}\right)
         \text{ and } \|\Phi\|_{\ell^q} \le c
       \right\},
\end{equation*}
where $\|\Phi\|_{\ell^q} \coloneqq \max_{1 \leq i \leq L} \max \{ \|W^i\|_{\ell^q}, \|b^i\|_{\ell^q} \}$.

To study target classes related to neural networks, the following definition will be useful.
\begin{definition}
    Let $U,\mathcal{H} \subset C([0,1]^d)$.
    We say that $U$ contains a copy of $\mathcal{H}$, attached to $u_0 \in U$ with constant $c_0 \in (0,\infty)$, if
    \(
        u_0 + c_0\cdot\mathcal{H} \subset U.
    \)
\end{definition}

\subsection{Lower bound}
\label{sub:MainResultProof}

The following result constitutes the main result of the present paper.
Theorem~\ref{thm:main_intro} readily follows from it as a special case.

\begin{theorem}\label{thm:main}Let $L\in \mathbb{N}_{\geq3}$, $d,B\in \mathbb{N}$, $p,q\in [1,\infty]$, and $c\in (0,\infty)$.
    Suppose that the target class $U\subset C([0,1]^d)$ contains  
    a copy of $\mathcal{H}_{(d,B\dots , B,1),c}^q$ with constant $c_0 \in (0,\infty)$, where the $B$ in $(d,B,\dots , B,1)$ appears $L-1$ times. 
    Then, for any $s\in \mathbb{N}$ with $s\le \min\left\{\frac{B}{3},d\right\}$ we have
    \begin{equation}
           \mathrm{err}_m^{MC}(U,L^p([0,1]^d))
           \geq c_0\cdot \frac{\Omega}{(32s)^{1+\frac{s}{p}}}\cdot m^{-\frac{1}{p} - \frac{1}{s}},
    \end{equation}
    where
    \begin{equation}
        \Omega
        \coloneqq \begin{cases}
            \frac{1}{8 \cdot 3^{\nicefrac{2}{q}}} \cdot c^L \cdot s^{1-\frac{2}{q}}
            & \text{if } q\le 2 \\
            \frac{1}{48} \cdot c^L\cdot B^{(L-1)(1-\frac{2}{q})}
            & \text{if } q\geq 2 .
        \end{cases}
    \end{equation}
\end{theorem}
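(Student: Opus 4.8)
The plan is to reduce the lower bound on $\mathrm{err}_m^{MC}$ to a classical ``fooling'' argument from information-based complexity. Concretely, I would exhibit a large family of ReLU networks in $\mathcal{H}_{(d,B,\dots,B,1),c}^q$ that are (i) all uniformly small in sup-norm except on tiny pairwise-disjoint ``bumps'', so that any set of $m$ sample points can only ``see'' a small fraction of them, while (ii) the $L^p$-norm of each bump is quantitatively bounded below by a constant governed by $\Omega$. Then the standard argument applies: if an algorithm uses $m$ samples, there must be a bump whose support misses all of them; the adversary can present the corresponding network (or its negative, attached to $u_0$), and the algorithm cannot distinguish the two, forcing an $L^p$-error at least half the bump's $L^p$-norm. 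Averaging over the randomness of a Monte-Carlo method (via a Yao-type / Markov-inequality argument on $\mathbf{m}$) converts this into the stated bound on $\mathrm{err}_m^{MC}$.

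The heart of the construction is building the bump function as a ReLU network of depth $L$, width $B$, and controlled $\ell^q$-coefficient norm. I would start from a one-dimensional ``triangle'' or ``tooth'' function realized by a shallow ReLU network, then tensorize/compose to get a $d$-dimensional bump supported on a cube of side length roughly $1/n$ for an integer grid parameter $n$; tiling $[0,1]^d$ this way yields $\sim n^d$ disjoint bumps. The depth-$L$ structure is used to sharpen the bump (iterated composition of a simple hat map, as in the Telgarsky-type constructions) so that the sup-norm stays $O(1)$ while the coefficient budget $c$ gets amplified multiplicatively across layers — this is exactly where the $c^L$ factor and the width-dependent factor $(B^{1-2/q})^{L-1}$ in $\Omega$ come from in the $q\ge 2$ case, and where tracking the $\ell^q\to\ell^2$ (equivalently $\ell^q\to\ell^\infty$) norm conversions produces the $s^{1-2/q}$ or $(3d)^{1-2/q}$ factors. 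The restriction $s\le \min\{B/3,d\}$ and the ``width up to $3d$'' in the intro version reflect needing $3$ neurons per active coordinate in the bump gadget and at most $s\le d$ active coordinates; one then sets $n \approx (m/(\text{const}\cdot s))^{1/s}$ to balance ``number of bumps'' against ``number of samples,'' which yields the $m^{-1/p-1/s}$ rate.

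More precisely, I would: (1) fix the grid parameter $n$ and the active-coordinate count $s$, and for each of the $n^s$ cells in an $s$-dimensional sub-grid define a translated/scaled copy $\phi_k$ of a fixed bump $\phi$; (2) verify $\phi \in \mathcal H^q$ with the claimed constant by explicitly writing down the coefficient matrices layer by layer and computing $\|\Phi\|_{\ell^q}$, carefully distributing the scaling so each layer's norm is $\le c$ and the realization has the right height — this gives a lower bound $\|\phi_k\|_{L^p} \ge \Omega \cdot c_0^{-1}\cdot(\text{something})\cdot n^{-s/p}$ after accounting for bump support volume $n^{-s}$ within $[0,1]^d$ (the extra $d-s$ coordinates contribute measure $1$); (3) run the fooling argument: among $n^s$ bumps, at most $m$ cells contain a sample point, so if $n^s > m$ there is a ``free'' bump $\phi_{k^*}$, and the two candidate functions $u_0 \pm c_0\phi_{k^*}$ agree on all samples, forcing error $\ge c_0\|\phi_{k^*}\|_{L^p}$; (4) handle the Monte-Carlo case by applying the argument on the event $\{\mathbf m \le 2m\}$, which has probability $\ge 1/2$ by Markov, and choosing $n^s = \lceil 2m\rceil + 1$ or similar; (5) optimize constants to arrive at the factor $(64s)^{1+s/p}$.

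The main obstacle I expect is step (2): designing the depth-$L$ ReLU bump so that the multiplicative amplification of the coefficient budget is genuinely realized — i.e., proving a sup/$L^p$-norm \emph{lower} bound on the bump that scales like $c^L$ (and like $(B^{1-2/q})^{L-1}$ when $q\ge2$) while every layer individually respects $\|W^i\|_{\ell^q},\|b^i\|_{\ell^q}\le c$. Getting the $\ell^q$ bookkeeping right across the two regimes $q\le 2$ and $q\ge 2$ (where the relevant embedding constants between $\ell^q$ and $\ell^2$ flip direction) is delicate, and is presumably why the paper isolates these two cases in the definition of $\Omega$. Everything else — the tiling, the counting, the Markov trick for randomization, and the final constant-chasing — is routine once the gadget and its norm bound are in hand.
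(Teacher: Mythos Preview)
Your plan is essentially the paper's proof: it isolates the gadget construction (Lemmas~\ref{lem:ImplementingBadFunctionsDeepBigQ} and~\ref{lem:ImplementingBadFunctionsDeepSmallQ}, placing scaled copies of a bump $\vartheta_{M,y}^{(s)}$ into $\mathcal{H}_{(d,B,\dots,B,1),c}^q$) from the information-theoretic lower bound (Theorem~\ref{thm:GeneralLowerBound}, the disjoint-support fooling argument together with the Markov trick on $\mathbf{m}$), and combines them exactly as you describe in steps~(1)--(5).

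One point worth correcting concerns the mechanism in step~(2), which you flag as the main obstacle. The depth is \emph{not} used to sharpen the bump via iterated hat composition in the Telgarsky sense. The bump
\[
  \vartheta_{M,y}^{(s)}(x)=\varrho\Bigl(\textstyle\sum_{i\le s}\Lambda_{M,y_i}(x_i)-(s-1)\Bigr)
\]
is built entirely in the first two layers (the localization scale $M$ enters only through the affine part of layer~1), and layers $3,\dots,L$ merely multiply the result by a scalar. For $q\ge 2$ each intermediate layer uses the matrix $\frac{c}{B^{2/q}}\,1_{B\times B}$, contributing a factor $c\,B^{1-2/q}$ per layer; for $q\le 2$ each intermediate layer is $c$ times a single-entry matrix propagating one coordinate, contributing a factor $c$. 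This is considerably simpler than a Telgarsky-type construction and makes the $\ell^q$ bookkeeping mechanical: the two regimes differ only in whether one spreads the signal across all $B$ channels (to fully exploit the $\ell^q$ budget when $q>2$) or confines it to one channel (when $q<2$). Your anticipated difficulty largely evaporates once you see this.
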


\begin{proof}
This follows by combining Theorem~\ref{thm:GeneralLowerBound}
with Lemmas~\ref{lem:ImplementingBadFunctionsDeepBigQ}
and \ref{lem:ImplementingBadFunctionsDeepSmallQ} in the appendix.
\end{proof}

\begin{remark}
For $p \ll \infty$, the bound from above does not necessarily imply that an intractable number of training samples is needed.
This is a reflection of the fact that efficient learning is possible (at least if one only considers the number of training samples and not the runtime of the algorithm) 
in this regime.
Indeed, it is well-known in statistical learning theory that one obtains learning bounds based on the entropy numbers (w.r.t.\@ the $L^\infty$ norm) of the class of target functions, when the error is measured in $L^2$, see, for instance,~\citet[][Proposition~7]{CuckerSmaleMathematicalFoundationsOfLearning}.
The $\varepsilon$-entropy numbers of a class of neural networks with $L$ layers and $w$
(bounded) weights scale linearly in $w,L$ and logarithmically in $1/\varepsilon$,
so that one gets tractable $L^2$ learning bounds.
By interpolation for $L^p$ norms (noting that in our case the target functions are bounded,
so that the $L^\infty$ reconstruction error is bounded, even though the decay with $m$ is very bad),
this also implies $L^p$ learning bounds, but these get worse and worse as $p \to \infty$. 
We remark that these learning bounds are based on empirical risk minimization,
which might be computationally infeasible \citep{VuInfeasabilityOfTrainingNN};
since our lower bounds should hold for any feasible
algorithm (irrespective of its computational complexity),
this means that one cannot expect to get an intractable lower bound for $p \ll \infty$
in our setting.
\end{remark}

The idea of the proof of Theorem~\ref{thm:main} (here only presented for $u_0=0$ and $s = d$, which implies that $B \geq 3d$) is as follows:

\setlength{\leftmargini}{0.5cm}

\begin{enumerate}
    \item We first show (see Lemmas~\ref{lem:ImplementingBadFunctionsDeepBigQ} and \ref{lem:ImplementingBadFunctionsDeepSmallQ}) that the neural network set $\mathcal{H}_{(d,B,\dots,B,1),c}^q$ contains a large class of \enquote{bump functions} of the form $\lambda \cdot \vartheta_{M,y}$.
    Here, $\vartheta_{M,y}$ is supported on the set $y + [-\frac{1}{M}, \frac{1}{M}]^d$ and satisfies $\|\vartheta_{M,y}\|_{L^p([0,1]^d)} \asymp M^{-d/p}$, where $M \in \mathbb{N}$ and $y \in [0,1]^d$ can be chosen arbitrarily; see Lemma~\ref{lem:BumpProperties}.
    The size of the scaling factor $\lambda = \lambda(M,c,q,d,L)$ depends crucially on the regularization parameters $c$ and $q$.
    This is the main technical part of the proof, requiring to construct suitable neural networks adhering to the imposed $\ell^q$ restrictions on the weights for which $\lambda$ is as big as possible.

    \item If one learns using $m$ points samples $x_1,\dots,x_m$ and if $M = \mathcal{O}(m^{1/d})$, then a volume packing argument shows that there exists $y \in [0,1]^d$ such that $\vartheta_{M,y}(x_i) = 0$ for all $i\in[m]$. This means that the learner cannot distinguish the function $\lambda \cdot \vartheta_{M,y} \in \mathcal{H}_{(d,B,\dots,B,1),c}^q$ from the zero function and will thus make an error of roughly $\|\lambda \cdot \vartheta_{M,y}\|_{L^p} \asymp \lambda \cdot M^{-d/p}$.
    This already implies the lower bound in Theorem~\ref{thm:main} for the case
    of \emph{deterministic} algorithms.
    
    \item To get the lower bound for \emph{randomized} algorithms using $m$ point samples on average, we employ a technique from information-based complexity \citep[see, e.g.,][]{HeinrichRandomApproximation}: We again set $M = \mathcal{O}(m^{1/d})$ and define $(y_\ell)_{\ell \in [M/2]^d}$ as the nodes of a uniform grid on $[0,1]^d$ with width $2/M$. Using a volume packing argument, we then show that for any choice of $m$ sampling points $x_1,\dots,x_m$,
    \enquote{at least half of the functions $\vartheta_{M,y_\ell}$ avoid all the sampling points},
    i.e., for at least half of the indices $\ell$, it holds that $\vartheta_{M,y_\ell}(x_i) = 0$
    for all $i\in[m]$.
    A learner using the samples $x_1,\dots,x_m$ can thus not distinguish between the zero function and 
    $\lambda \cdot \vartheta_{M,y_\ell} \in \mathcal{H}_{(d,B,\dots,B,1),c}^q$ for at least
    half of the indices $\ell$.
    Therefore, any \emph{deterministic} algorithm will make an error of
    $\Omega(\lambda \cdot M^{-d/p})$ \emph{on average with respect to $\ell$}.
    
    \item Since each randomized algorithm $\boldsymbol{A} = (A_\omega)_{\omega \in \Omega}$ is a collection of deterministic algorithms and since taking an average commutes with taking the expectation, this implies that any randomized algorithm will have an expected error of
    $\Omega(\lambda \cdot M^{-d/p})$ \emph{on average with respect to $\ell$}.
    This easily implies the stated bound.
\end{enumerate}
As mentioned in the introduction, we want to emphasize that well-trained neural networks can indeed exhibit such bump functions, see Figure~\ref{fig:unstableNN} and~\cite{adcock2021gap,fiedler2023stable}.

\subsection{Upper bound}
\label{sub:ProofUpperBound}

In this section we present our main upper bound, which directly implies the statement of Theorem~\ref{thm:upper_bound_intro}.

\begin{theorem}
    \label{thm:upper_bound}
    Let $L,d\in \mathbb{N}$, $q\in [1,\infty]$, $c\in (0,\infty)$, and $N_1,\dots , N_{L-1}\in \mathbb{N}$.
    Then, we have
    \begin{equation*}
        \mathrm{err}_m^{MC}\big(\mathcal{H}_{(d,N_1,\dots , N_{L-1},1),c}^q,L^\infty([0,1]^d)\big) \le 
        \begin{cases}
                 \sqrt{d}\cdot c^L\cdot m^{-\frac1d} & \text{if } q\le 2 \\
                 \sqrt{d}\cdot c^L\cdot (\sqrt{d}\cdot N_1 \cdots  N_{L-1})^{1-\frac{2}{q}}\cdot m^{-\frac1d} & \text{if } q\geq 2 .
        \end{cases}
    \end{equation*}
\end{theorem}
\begin{proof}
  This follows by combining Lemmas~\ref{lem:LipschitzContinuityForNN} and~\ref{lem:LipschitzRecovery} in the appendix.
\end{proof}
Let us outline the main idea of the proof. We first show that each neural network ${R (\Phi) \in \mathcal{H}_{(N_0,\dots,N_L),c}^q}$ is Lipschitz-continuous, where the Lipschitz constant can be conveniently bounded in terms of the parameters $N_0,\dots,N_L, c, q$, see Lemma~\ref{lem:LipschitzContinuityForNN} in the appendix. In Lemma~\ref{lem:LipschitzRecovery}, we then show that any function with moderate Lipschitz constant can be reconstructed from samples by piecewise constant interpolation.

\section{Numerical Experiments}
\label{sec:num}

Having established fundamental bounds on the performance of any learning algorithm, we want to numerically evaluate the performance of commonly used deep learning methods. To illustrate our main result in Theorem~\ref{thm:main}, we estimate the error in~\eqref{eq:mc_err} by a tractable approximation in a student-teacher setting. Specifically, we estimate the minimal error over neural network target functions (\enquote{teachers}) $\widehat{U}\subset \mathcal{H}^q_{(d,N_1,\dots,N_{L-1},1),c}$ for deep learning algorithms $\widehat{A} \subset\mathrm{Alg}^{MC}_m(U,L^p)$ via Monte-Carlo sampling, i.e.,
\begin{equation}
\label{eq:min_max}
    \textstyle
    \widehat{\mathrm{err}}_m\left(\widehat{U},L^p;\widehat{A}\right)
    \coloneqq \inf_{(\mathbf{A},\mathbf{m})\in \widehat{A}} \,\,
          \sup_{u\in \widehat{U}} \,\,
           \avsum_{\omega \in \widehat{\Omega}}\left(\avsum_{j\in [J]} \big(u(X_j) - A_\omega(u)(X_j)\big)^p\right)^{1/p},
\end{equation}
where $(X_j)_{j=1}^J$ are independent evaluation samples uniformly distributed on\footnote{To have centered input data, we consider the hypercube $[-0.5,0.5]^d$ in our experiments. Note that this does not change any of the theoretical results.} $[-0.5,0.5]^d$ and $\widehat{\Omega}$ represents the seeds for the algorithms.

We obtain teacher networks $u\in \mathcal{H}^\infty_{(d,N_1,\dots,N_{L-1},1),c}$ by sampling their coefficients $\Phi$ componentwise according to a uniform distribution on $[-c,c]$. For every algorithm $(\mathbf{A},\mathbf{m})\in \widehat{A}$ and seed $\omega\in\widehat{\Omega}$ we consider point sequences $\mathbf{x}(u)$ uniformly distributed in $[-0.5,0.5]^d$ with $\mathbf{m}(\omega)=m$. The corresponding point samples are used to train the coefficients of a neural network (\enquote{student}) using the Adam optimizer~\citep{kingma2015adam} with exponentially decaying learning rate. We consider input dimensions $d=1$ and $d=3$, for each of which we compute the error in~\eqref{eq:min_max} for $4$ different sample sizes $m$ over $40$ teacher networks $u$. For each combination, we train student networks with $3$ different seeds, $3$ different widths, and $3$ different batch-sizes. In summary, this yields
$2 \cdot 4 \cdot 40 \cdot 3 \cdot 3 \cdot 3 = 8640$ experiments each executed on a single GPU. The precise hyperparameters can be found in Tables~\ref{table:hp} and~\ref{table:hp_min_max} in Appendix~\ref{app:hyperparameters}.

Figure~\ref{fig:min_max} shows that there is a clear gap between the errors $\widehat{\mathrm{err}}_m(\widehat{U},L^p;\widehat{A})$ for $p \in \{1,2\}$ and $p=\infty$. Especially in the one-dimensional case, the rate $\widehat{\mathrm{err}}_m(\widehat{U},L^\infty;\widehat{A})$ w.r.t.\@~the number of samples $m$ also seems to stagnate at a precision that might be insufficient for certain applications.
Figure~\ref{fig:teacher_student} illustrates that the errors are caused by spikes of the teacher network which are not covered by any sample.
Note that this is very similar to the construction in the proof of our main result, see Section~\ref{sub:MainResultProof}.

In general, the rates worsen when considering more teacher networks $\widehat{U}$ and improve when considering further deep learning algorithms $\widehat{A}$, including other architectures or more elaborate training and sampling schemes. Note, however, that each setting needs to be evaluated for a number of teacher networks, sample sizes, and seeds. We provide an extensible implementation\footnote{The code can be found at ~\url{https://github.com/juliusberner/theory2practice}.} in PyTorch~\citep{paszke2019pytorch} featuring multi-node experiment execution and hyperparameter tuning using Ray Tune~\citep{liaw2018tune}, experiment tracking using Weights \& Biases and TensorBoard, and flexible experiment configuration. Building upon our work, research teams with sufficient computational resources can provide further numerical evidence on an even larger scale.

\begin{figure}[t]
  \begin{center}
    \includegraphics[width=\textwidth]{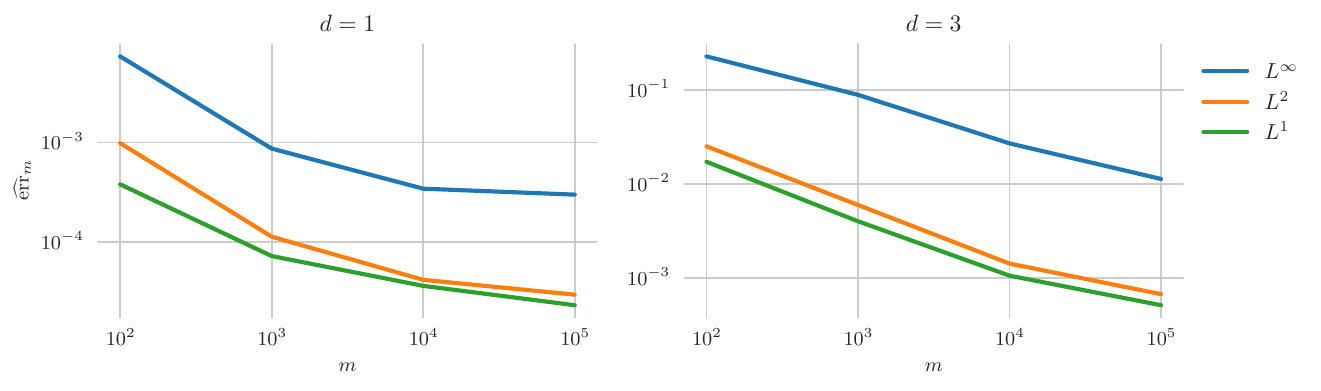}
  \end{center}
  \caption{Evaluation of the error in~\eqref{eq:min_max} for $p\in\{1,2,\infty\}$, input dimensions $d \in \{1,3\}$, sample sizes ${m\in \{10^2,10^3,10^4, 10^5\}}$, and hyperparameters given in Tables~\ref{table:hp} and~\ref{table:hp_min_max}.}
  \label{fig:min_max}
\end{figure}

\begin{figure}[t]
  \begin{center}
    \includegraphics[width=\textwidth]{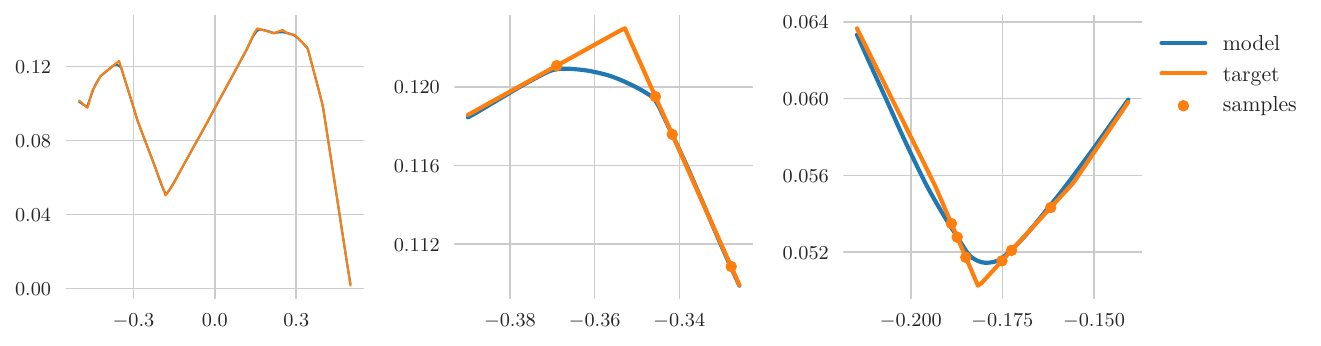}
  \end{center}
  \caption{Target function (\enquote{teacher}), samples, and model of the deep learning algorithm (\enquote{student}) attaining the min-max value in~\eqref{eq:min_max} for $m=100$ and $p=\infty$ in the experiment depicted in Figure~\ref{fig:min_max}. The middle and right plots are zoomed versions of the left plot. The $L^\infty$ error $(2.7\cdot 10^{-3})$ is about one magnitude larger than the $L^2$ and $L^1$ errors $(3.9\cdot 10^{-4}$ and $2.4\cdot 10^{-4})$, which is caused by spikes of the teacher network between samples.}
  \label{fig:teacher_student}
\end{figure}

\section{Discussion and Limitations}\label{sec:discussion}
\paragraph{Discussion. }
We derived fundamental upper and lower bounds for the number of samples needed for any algorithm to reconstruct an arbitrary function from a target class containing realizations of neural networks with ReLU activation function of a given architecture and subject to $\ell^q$ regularization constraints on the network coefficients, see Theorems~\ref{thm:main} and~\ref{thm:upper_bound}. These bounds are completely explicit in the network architecture, the type of regularization, and the norm in which the reconstruction error is measured. We observe that our lower bounds are severely more restrictive if the error is measured in the uniform $L^\infty$ norm rather than the (more commonly studied) $L^2$ norm.
Particularly, learning a class of neural networks with ReLU activation function with moderately high accuracy in the $L^\infty$ norm is intractable for moderate input dimensions, as well as network widths and depths. We anticipate that further investigations into the sample complexity of neural network classes can eventually contribute to a better understanding of possible circumstances under which it is possible to design reliable deep learning algorithms and help explain well-known instability phenomena such as adversarial examples.  
Such an understanding can be beneficial in assessing the potential and limitations of machine learning methods applied to security- and safety-critical scenarios.
\paragraph{Limitations and Outlook. }
We finally discuss some possible implications and also limitations of our work.
     First of all, our results are highly specific to neural networks with the ReLU activation function. We expect that obtaining similar results for other activation functions will require substantially new methods. We plan to investigate this in future work.
     
     The explicit nature of our results reveal a discrepancy between the lower and upper bound, especially for high dimensions.
     We conjecture that both the current upper and lower bounds are not quite optimal.
     Determining to which extent one can tighten the bounds is an interesting open problem.
     
     Our analysis is a worst-case analysis in the sense that we show that for any given algorithm $A$, there exists at least one $u$ in our target class $U$ on which $A$ performs poorly. The question of whether this poor behavior is actually generic will be studied in future work. One way to establish such generic results could be to prove that our considered target classes contain copies of neural network realizations attached to many different $u$'s.
     
     Finally, we consider target classes $U$ that contain all realizations of neural networks with a given architecture subject to different regularizations. This can be justified as follows: Whenever a deep learning method is employed to reconstruct a function $u$ by representing it approximately by a neural network (without further knowledge about $u$), a natural minimal requirement is that the method should perform well if the sought function is in fact equal to a neural network. However, if additional problem information about $u$ can be incorporated into the learning problem it may be possible to overcome the barriers shown in this work. The degree to which this is possible, as well as the extension of our results to other architectures, such as convolutional neural networks, transformers, and graph neural networks will be the subject of future work.

\subsubsection*{Acknowledgments}
The research of Julius Berner was supported by the Austrian Science Fund (FWF) under grant I3403-N32 and by the Vienna Science and Technology Fund (WWTF) under grant ICT19-041. The computational results presented have been achieved in part using the Vienna Scientific Cluster (VSC).
Felix Voigtlaender acknowledges support by the DFG in the context of the Emmy Noether junior research group VO 2594/1-1.

\bibliography{references}
\bibliographystyle{iclr2023_conference}

\newpage
\appendix

\section{Proof of the lower bound in Section \ref{sub:MainResultProof}}
\label{app:lower}

\subsection{Construction of hat functions implemented by ReLU networks}

For $d \in \N$, $M > 0$, $\sigma \in \R$, $s \in [d]$, and $y \in \R^d$, define
\begin{equation}
\label{eq:lambda}
     \Lambda_{M,\sigma} : \quad
  \R \to (-\infty,1], \quad
  t \mapsto \begin{cases}
              0                        & \text{if } t \leq \sigma - \frac{1}{M} \\
              1 - M \cdot |t - \sigma| & \text{if } t \geq \sigma - \frac{1}{M},
            \end{cases}
\end{equation}
and furthermore
\begin{alignat*}{5}
  && \Delta_{M,y}^{(s)} : \quad
  & \R^d \to (-\infty,1], \quad
  && x \mapsto \left(\sum_{i=1}^s \Lambda_{M,y_i}(x_i)\right) - (s - 1), \\
  && \vartheta_{M,y}^{(s)} : \quad
  & \R^d \to [0,1], \quad
  && x \mapsto \varrho(\Delta_{M,y}^{(s)}(x)) ,
\end{alignat*}
where, as before, $\varrho : \R \to \R, \, x \mapsto \max \{ 0,x \}$, denotes the ReLU activation function.
A plot of $\Lambda_{M,\sigma}$ is shown in Figure~\ref{fig:LambdaPlot}.

\begin{figure}[t]
  \begin{center}
    \includegraphics[width=\linewidth]{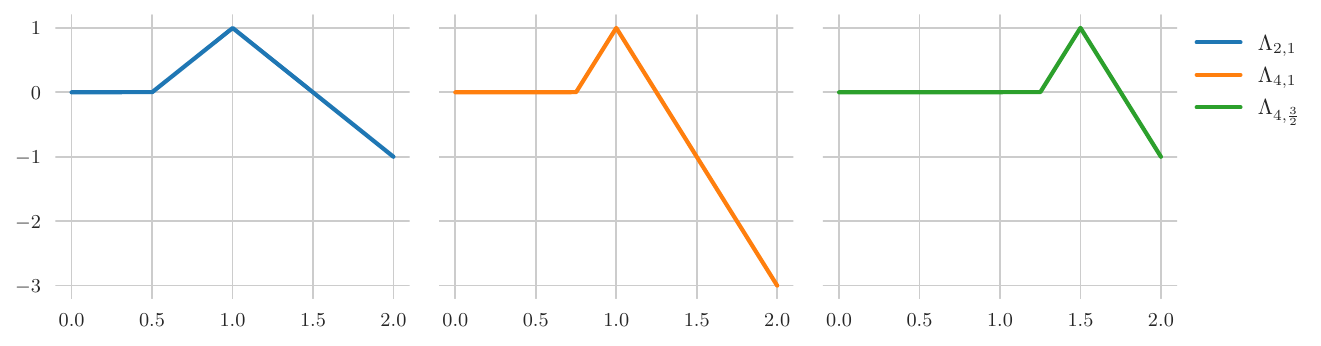}
  \end{center}
  \caption{\label{fig:LambdaPlot}
  Plots of the function $\Lambda_{M,\sigma}$ in Equation~\eqref{eq:lambda}
  for $(M,\sigma)\in \{(2,1),(4,1),(4,\frac32)\}$.}
\end{figure}

With these definitions, the function $\vartheta_{M,y}^{(s)}$ satisfies the following properties:
\begin{lemma}\label{lem:BumpProperties}
  For $d \in \N$, $s \in [d]$, $M \geq 1$, $y \in [0,1]^d$, and $p \in (0,\infty]$, we have
  \[
    \supp \vartheta_{M,y}^{(s)}
    \subset y + (M^{-1} \cdot [-1,1]^s \times \R^{d - s})
  \]
  and
  \[
    \frac12 \cdot (2s)^{-s/p}\cdot M^{-s/p}\le  \| \vartheta_{M,y}^{(s)} \|_{L^p([0,1]^d)}
    \le 2^{s/p} \cdot M^{-s/p} .
  \]
\end{lemma}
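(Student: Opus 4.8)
The plan is to establish the support statement and the two norm bounds separately, since they involve quite different ideas.

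\textbf{Support.} First I would analyze $\Lambda_{M,\sigma}$. By definition $\Lambda_{M,\sigma}(t) = 0$ whenever $t \leq \sigma - \frac1M$, and for $t \geq \sigma - \frac1M$ we have $\Lambda_{M,\sigma}(t) = 1 - M|t - \sigma| \leq 1$, with the value being positive precisely when $|t-\sigma| < \frac1M$, i.e.\ on $(\sigma - \frac1M, \sigma + \frac1M)$. Hence $\Lambda_{M,\sigma}(t) \le 0$ as soon as $t \notin (\sigma - \frac1M, \sigma + \frac1M)$. Now if $x \notin y + (M^{-1}[-1,1]^s \times \R^{d-s})$, then there is some index $i \in [s]$ with $|x_i - y_i| > M^{-1}$, so $\Lambda_{M,y_i}(x_i) \leq 0$, while every other term $\Lambda_{M,y_j}(x_j) \leq 1$. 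Therefore $\Delta_{M,y}^{(s)}(x) = \sum_{j=1}^s \Lambda_{M,y_j}(x_j) - (s-1) \leq (s-1) \cdot 1 + 0 - (s-1) = 0$, and thus $\vartheta_{M,y}^{(s)}(x) = \varrho(\Delta_{M,y}^{(s)}(x)) = 0$. This gives the claimed support inclusion.

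\textbf{Lower bound on the norm.} For the lower bound I would exhibit a small cube on which $\vartheta_{M,y}^{(s)}$ is bounded below. On the set where $|x_i - y_i| \leq \frac{1}{2sM}$ for all $i \in [s]$, each term satisfies $\Lambda_{M,y_i}(x_i) = 1 - M|x_i - y_i| \geq 1 - \frac{1}{2s}$, so $\Delta_{M,y}^{(s)}(x) \geq s(1 - \frac{1}{2s}) - (s-1) = \frac12$, whence $\vartheta_{M,y}^{(s)}(x) \geq \frac12$. The catch is that $y \in [0,1]^d$ may lie on the boundary, so the cube $\prod_{i=1}^s [y_i - \frac{1}{2sM}, y_i + \frac{1}{2sM}]$ need not be contained in $[0,1]^d$; however, at least a fraction $2^{-s}$ of its volume (one ``orthant'' of the cube around $y$) does lie in $[0,1]^s$ in the first $s$ coordinates, and the remaining $d-s$ coordinates range freely over $[0,1]$. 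So the set $S \coloneqq \{x \in [0,1]^d : |x_i - y_i| \leq \frac{1}{2sM} \text{ for } i \in [s]\}$ has measure at least $2^{-s}(sM)^{-s}$ (using $M \geq 1$ so that $\frac{1}{2sM} \leq \frac12$ and the half-cube fits). Then $\|\vartheta_{M,y}^{(s)}\|_{L^p}^p \geq \int_S (\tfrac12)^p \geq (\tfrac12)^p \cdot 2^{-s}(sM)^{-s}$, and taking $p$-th roots yields $\|\vartheta_{M,y}^{(s)}\|_{L^p} \geq \frac12 \cdot 2^{-s/p}(sM)^{-s/p} \geq \frac12 (4s)^{-s/p} M^{-s/p}$ (absorbing the factor $2^{-s/p} \leq 2^{-s/p}$ and noting $2^{-s/p}(sM)^{-s/p} = (2sM)^{-s/p} \geq (4sM)^{-s/p}$). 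For $p = \infty$ the bound $\|\vartheta_{M,y}^{(s)}\|_{L^\infty} \geq \frac12$ follows directly, matching the convention that $(4s)^{-s/\infty} = 1$.

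\textbf{Upper bound on the norm.} For the upper bound, since $0 \leq \vartheta_{M,y}^{(s)} \leq 1$ pointwise and $\vartheta_{M,y}^{(s)}$ vanishes outside $y + (M^{-1}[-1,1]^s \times \R^{d-s})$, I would bound $\|\vartheta_{M,y}^{(s)}\|_{L^p}^p \leq \mathrm{vol}\big([0,1]^d \cap (y + M^{-1}[-1,1]^s \times \R^{d-s})\big) \leq (2M^{-1})^s \cdot 1 = 2^s M^{-s}$, and take $p$-th roots to get $\|\vartheta_{M,y}^{(s)}\|_{L^p} \leq 2^{s/p} M^{-s/p}$; the case $p = \infty$ is immediate from $\vartheta \leq 1$.

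The only mild subtlety — the ``main obstacle'' such as it is — is the boundary-effect bookkeeping in the lower bound (ensuring a definite fraction of the small cube around $y$ stays inside $[0,1]^d$ and that the constants line up with $(4s)^{-s/p}$); everything else is a direct computation from the piecewise-linear formula for $\Lambda_{M,\sigma}$. I would handle the boundary issue cleanly by the half-cube argument above, which costs exactly the factor $2^{-s}$ that is already budgeted in the stated constant.
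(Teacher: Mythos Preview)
Your proposal is correct and follows essentially the same approach as the paper: the support is handled by noting that $\Delta_{M,y}^{(s)}(x)>0$ forces each $\Lambda_{M,y_i}(x_i)>0$ (you phrase this as the contrapositive), the upper bound comes from $0\le\vartheta\le 1$ together with the support inclusion, and the lower bound comes from the observation that $\vartheta\ge\tfrac12$ on the small cube of half-width $\tfrac{1}{2sM}$ around $y$, with the same ``at least a $2^{-s}$ fraction survives intersection with $[0,1]^s$'' boundary bookkeeping. Your intermediate constant $(2sM)^{-s}$ is in fact slightly sharper than the paper's $(4Ms)^{-s}$ (the paper defers to an external reference for that step), but both land on the stated bound after the final weakening.
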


\begin{proof}
Let us first give a quick overview of the proof. The statement on the support of $\vartheta_{M,y}^{(s)}$ follows by observing that $\Delta_{M,y}^{(s)}(x)>0$ can only happen if $\Lambda_{M,y_i}(x_i)>0$ for all $i\in [s]$. As $0\le\vartheta_{M,y}^{(s)}\le 1$, the upper bound on the $L^p([0,1]^d)$ norm can then be estimated by the Lebesgue measure of the intersection of the support of $\vartheta_{M,y}^{(s)}$ and the hypercube $[0,1]^d$. For the lower bound we compute the measure of the intersection with a subset of the support on which it holds that $\vartheta_{M,y}^{(s)}\ge \frac{1}{2}$.

We start by proving the statement on the support of $\vartheta_{M,y}^{(s)}$. If $0 \neq \vartheta_{M,y}^{(s)}(x)$, then $\Delta_{M,y}^{(s)}(x) > 0$,
  meaning $\sum_{i=1}^s \Lambda_{M,y_i}(x_i) > s - 1$.
  Because of $\Lambda_{M,y_i}(x_i) \in (-\infty,1]$ for all $i \in [s]$,
  this is only possible if $\Lambda_{M,y_i}(x_i) > 0$ for all $i \in [s]$.
  Directly from the definition of $\Lambda_{M,y_i}$ (see also Figure~\ref{fig:LambdaPlot}),
  this implies $|x_i - y_i| \leq \frac{1}{M}$ for all $i \in [s]$,
  meaning $x \in y + (M^{-1} [-1,1]^s \times \R^{d - s})$.
  This proves the first claim.

  \medskip{}

  Regarding the second claim, define $y^\ast \coloneqq (y_1,\dots,y_s) \in \R^s$,
  and, for $k\in\N$, denote by $\lambda_k$ the Lebesgue measure on $\R^k$.
  Then, since $[0,1]^d \cap \supp \vartheta_{M,y}^{(s)} \subset (y^\ast + M^{-1} [-1,1]^s) \times [0,1]^{d-s}$
  and $0 \leq \vartheta_{M,y}^{(s)} \leq 1$,
  we see that
  \[
    \| \vartheta_{M,y}^{(s)} \|_{L^p([0,1]^d)}
    \leq \big(
           \lambda_s \bigl( y^\ast + M^{-1} \, [-1,1]^s \bigr)
         \big)^{1/p}
    =    \left(\frac{2}{M}\right)^{s/p}
    =2^{s/p} M^{-s/p}.
  \]

  For the converse estimate, let us also write $x^\ast = (x_1,\dots,x_s)$ for $x \in \R^d$.
  Then, if $x \in \R^d$ satisfies $x^\ast \in y^\ast + \frac{1}{2 M s} [-1,1]^s$, we see
  \[
    y_i - \frac{1}{M}
    \leq y_i - \frac{1}{2 M s}
    \leq x_i
    \leq y_i + \frac{1}{2 M s}
    \quad \text{for } i \in [s].
  \]
  By definition of $\Lambda_{M,y_i}$, this implies
  $\Lambda_{M,y_i}(x_i) = 1 - M \cdot |x_i - y_i| \geq 1 - \frac{1}{2 s}$ and hence
  \[
    \Delta_{M,y}^{(s)}(x)
    = \left(\sum_{i=1}^s \Lambda_{M,y_i} (x_i)\right) - (s-1)
    \geq s - \frac{1}{2} - (s - 1)
    =    \frac{1}{2} ,
  \]
  so that $\vartheta_{M,y}^{(s)}(x) \geq \frac{1}{2}$.

  Finally, it is not difficult to show, that
  \begin{align*}
    \lambda_d \bigl( \bigl\{ x \in [0,1]^d \colon x^\ast \in y^\ast + \tfrac{1}{2 M s} [-1,1]^s \bigr\}\bigr)
    = \lambda_s \bigl([0,1]^s \cap (y^\ast + \tfrac{1}{2 M s} [-1,1]^s)\bigr)  \geq (2 M s)^{-s},
  \end{align*}
  see \citet[Equation~(A.1)]{grohs2021proof} for the details.
  Overall, we thus see
  \[
    \| \vartheta_{M,y}^{(s)} \|_{L^p([0,1]^d)}
    \geq \frac{1}{2} \cdot (2 M s)^{-s/p}.
    \qedhere
  \]
\end{proof}

Note that a compactly supported (non-trivial) function such as $\vartheta_{M,y}^{(s)}$ can only be represented by ReLU networks with more than two layers, see~\citet[Section 3]{blum1991approximation}. For this reason, we focus on the case $L\in \N_{\ge3}$ in this paper. 
Next, we show that scaled versions of the hat functions $\vartheta_{M,y}^{(s)}$ can be represented using neural networks of a suitable architecture and with a suitable bound on the magnitude of the coefficients.
We begin with the (more interesting) case where the exponent $q$ that determines the regularization of the weights satisfies $q \geq 2$.

\begin{lemma}\label{lem:ImplementingBadFunctionsDeepBigQ}
  Let $d \in \N$, $L \in \N_{\geq 3}$, $B \in \N_{\geq 3}$, $c > 0$, $q \in [2,\infty]$,
  and $s \in \N$ with $s \leq \min \{ \frac{B}{3}, d \}$. Then, there exists a constant
  \[
    \lambda \geq c^L \cdot B^{(L-1)(1 - \frac{2}{q})} / 12
  \]
  such that
  \[
    \nu \cdot \frac{\lambda}{M s} \cdot \vartheta_{M,y}^{(s)}
    \in \mathcal{H}_{(d,B,\dots,B,1),c}^q
    \qquad \forall \, M \in \N, \nu \in \{ \pm 1 \}, \text{ and } y \in [0,1]^d,
  \]
  where the $B$ in $(d,B,\dots,B,1)$ appears $L-1$ times.
\end{lemma}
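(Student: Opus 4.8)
The plan is to construct the network coefficients $\Phi = ((W^i,b^i))_{i=1}^L$ layer by layer, in such a way that $R(\Phi) = \nu \cdot \frac{\lambda}{Ms} \cdot \vartheta_{M,y}^{(s)}$ and $\|\Phi\|_{\ell^q} \le c$, where the value of $\lambda$ is whatever the construction forces, and then verify the claimed lower bound $\lambda \ge c^L (B^{1-2/q})^{L-1}/12$. The key observation is that $\vartheta_{M,y}^{(s)}(x) = \varrho\big(\sum_{i=1}^s \Lambda_{M,y_i}(x_i) - (s-1)\big)$, and each one-dimensional hat $\Lambda_{M,y_i}$ is itself expressible as a tiny ReLU network: since $\Lambda_{M,\sigma}(t) = \varrho(1 - M(t-\sigma)) + \varrho(1 + M(t-\sigma)) - 1$ for $t$ near $\sigma$ (one checks this identity holds on all of $\R$ using $\varrho(a)+\varrho(-a) = |a|$ and the explicit formula), we can realize all $s$ hats in parallel using $2s \le \frac{2}{3}B \le B$ neurons in the first hidden layer, with first-layer weights of size $\sim M$ and biases of size $\sim M\|y\|_\infty \le M$ — but crucially, to keep $\|W^1\|_{\ell^q} \le c$ we must rescale, pulling out a factor $M/c$ (roughly) which becomes part of $\lambda$.

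First I would set up the first layer: rows computing $\pm M(x_i - y_i) + 1$, which after applying $\varrho$ and a linear combination give the hats; scaling so that $\|W^1\|_{\ell^q}, \|b^1\|_{\ell^q} \le c$ costs a factor proportional to $\frac{c}{M \cdot (2s)^{1/q}}$ or so extracted into the overall constant. Next I would collapse the $2s$ half-hat neurons into the single quantity $\Delta_{M,y}^{(s)} = \sum_i \Lambda_{M,y_i}(x_i) - (s-1)$ and apply $\varrho$ — this uses the second hidden layer and a weight matrix with $\ell^q$ norm $\sim (2s)^{1/q}$, again rescaled to have norm $\le c$, contributing another factor $\sim c/(2s)^{1/q}$. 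Then layers $3,\dots,L-1$ are "filler" layers that must pass a single nonnegative scalar through unchanged while respecting the width-$B$ architecture and the $\ell^q$ constraint: the natural choice is a $1\times 1$ weight equal to $c$ (or $\alpha$ with $\alpha \le c$), composed with $\varrho$ (harmless on nonnegatives), each such layer multiplying the output by $c$ and costing nothing beyond $\|W^i\|_{\ell^q} = c \le c$. Finally layer $L$ is a $1 \times B$ (effectively $1\times 1$) output layer carrying the sign $\nu$ and a final scaling factor $\le c$. Multiplying the per-layer gains together yields the output $\nu \cdot \frac{\lambda}{Ms}\vartheta_{M,y}^{(s)}$ with $\lambda$ a product of $L$ factors each comparable to $c$, times powers of $s$ and $(2s)^{1/q}$; careful bookkeeping of the $(2s)^{1/q}$ versus $B^{1/q}$ discrepancies (using $2s \le B$) gives $\lambda \ge c^L (B^{1-2/q})^{L-1}/12$.

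The main obstacle I anticipate is the precise $\ell^q$-norm accounting, especially reconciling where the factor $(B^{1-2/q})^{L-1}$ comes from. Naively the hats only need width $2s$, not $B$, so the "extra" width $B$ seems wasted — but the point is that to make the $\ell^q$ norm of a row (or the relevant matrices) as small as possible one should spread the computation across as many of the $B$ available neurons as possible, i.e., replicate each needed neuron $\sim B/(2s)$ times and divide its weight accordingly; this replication is exactly what converts a factor $s^{1-2/q}$-type loss into the much larger gain $(B^{1-2/q})^{L-1}$, because for $q \ge 2$ spreading a fixed sum of entries over more coordinates decreases the $\ell^q$ norm. Getting the replication pattern consistent across all $L-1$ hidden layers, and checking that the biases (which cannot be replicated the same way) still satisfy $\|b^i\|_{\ell^q} \le c$, is the delicate part; the constant $1/12$ absorbs the slack. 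I would carry out the $q \ge 2$ case in full detail as above, then note the $q < 2$ case (which would be handled in a companion lemma) is easier since there is no benefit to spreading and the bound $\lambda \gtrsim c^L s^{1-2/q}$ follows from the bare width-$2s$ construction.
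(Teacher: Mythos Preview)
Your overall strategy matches the paper's: realize $\vartheta_{M,y}^{(s)}$ in two ReLU layers, use the remaining depth as amplifying filler, and spread the computation across all $B$ neurons to exploit the $\ell^q$ budget when $q\ge 2$. But two concrete steps are broken. First, your identity for $\Lambda_{M,\sigma}$ is false: $\Lambda_{M,\sigma}$ is \emph{not} a symmetric hat (by definition it equals $1-M|t-\sigma|$ for all $t\ge \sigma-M^{-1}$ and hence tends to $-\infty$), and for instance at $t=\sigma+\tfrac{1}{2M}$ your formula gives $\varrho(\tfrac12)+\varrho(\tfrac32)-1=1$ while $\Lambda_{M,\sigma}(t)=\tfrac12$. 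The paper uses instead
\[
  \tfrac12\,\varrho\bigl(t-\sigma+M^{-1}\bigr)-\varrho(t-\sigma)=\tfrac{1}{2M}\,\Lambda_{M,\sigma}(t),
\]
which has the added benefit that all first-layer weights and biases lie in $[-1,1]$ irrespective of $M$ (since $y\in[0,1]^d$ and $M\ge 1$), so no $M$-dependent rescaling is ever needed.

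Second, and more importantly, the construction you actually spell out (width $2s$ in layer~1, a $1\times 1$ weight in each filler layer $3,\dots,L-1$) yields only $\lambda$ of order $c^L s^{1-2/q}$ --- this is precisely the $q\le 2$ bound of the companion lemma, and for $q=\infty$ it misses the target $c^L B^{L-1}/12$ by a factor of roughly $B^{L-1}/s$. You correctly diagnose the issue in your obstacle paragraph, but the spreading has to be carried out in \emph{every} hidden layer, not just gestured at. In the paper's construction, layer~1 contains $r:=\lfloor B/(3s)\rfloor$ parallel copies of a $3s$-neuron block (the extra $s$ neurons per block are constant neurons encoding the shift $-(s-1)$, so that layer~2 can apply $\varrho$ to $\Delta_{M,y}^{(s)}$ rather than merely summing the $\Lambda$'s); layer~2 aggregates all $3rs$ inputs and broadcasts the scalar result to all $B$ output neurons; layers $3,\dots,L-1$ take $W^i=\tfrac{c}{B^{2/q}}\,1_{B\times B}$, each contributing a gain of $c\,B^{1-2/q}$ on a constant $B$-vector; and $W^L=\tfrac{\nu c}{B^{1/q}}(1,\dots,1)$. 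Multiplying these per-layer gains is exactly what produces the factor $(B^{1-2/q})^{L-1}$.
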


\begin{proof}
Let $M \in \N$, $y \in [0,1]^d$, and $\nu \in \{ \pm 1 \}$ be fixed. We will now construct the coefficients $( (W^1, b^1), \dots, (W^L, b^L) )$ of a neural network with the following properties:
\begin{enumerate}
    \item The first two layers $((W^1, b^1),(W^2,b^2))$ output at any of their $B$ output dimensions the function $C_1 \cdot \Lambda^{(s)}_{M,y}$ for a suitable scaling factor $C_1 = C_1(c,M,s,B,q) >0$.
    \item The following activation function yields $C_1 \cdot \vartheta_{M,y}^{(s)} = \varrho\big(C_1 \cdot \Lambda^{(s)}_{M,y}\big)$ for all output dimensions. 
    \item Each of the layers $((W^3,b^3),\dots,(W^{L-1},b^{L-1}))$ scales the previous output by another factor $C_2= C_2(c,B,q)>0$, leading to the output
    $C_1 C_2^{L-3} \cdot \vartheta_{M,y}^{(s)}$ in any of the $B$ output dimensions. This construction uses the fact that all intermediate outputs are positive by construction such that the intermediate ReLU activation functions $\varrho$ just act as identities.
    \item The last layer $(W^L,b^L)$ now computes the sum of the previous outputs scaled by another factor $C_3=C_3(c,B,q)>0$ and multiplied by $\nu$, such that the final one-dimensional output equals $\nu B C_1C_2^{L-3}C_3 \cdot \vartheta_{M,y}^{(s)}$. The result follows by setting $\lambda =  B C_1C_2^{L-3}C_3 Ms $ and choosing the scaling factors $C_1$, $C_2$, and $C_3$ as large as possible, constrained by the width $B$ and the regularization given by $c$ and $q$.
\end{enumerate}
  Define $r := \lfloor B / (3 s) \rfloor$, noting that $r \geq 1$, since $s \leq B/3$.
  We first introduce a few notations:
  We write $0_{k \times n}$ for the $k \times n$ matrix with all entries being zero;
  similarly, we write $1_{k \times n}$ for the $k \times n$ matrix with all entries being one.
  Furthermore, we denote by $(e_1,\dots,e_d)$ the standard basis of $\R^d$, and define
  \begin{equation}
    \begin{split}
      I_s & := (e_1 \mid \cdots \mid e_s) \in \R^{d \times s}, \\
      \alpha & := \Big(
                     \tfrac{M^{-1} - y_1}{2}
                     \Big| \tfrac{M^{-1} - y_2}{2}
                     \Big| \cdots
                     \Big| \tfrac{M^{-1} - y_s}{2}
                  \Big)
               \in \R^{1 \times s} , \\
      \beta & := \big( - y_1 \mid - y_2 \mid \cdots \mid - y_s) \in \R^{1 \times s}, \\
      \gamma & := \Big(
                    \tfrac{s-1}{s} \tfrac{1}{2 M}
                    \Big|
                    \cdots
                    \Big|
                    \tfrac{s-1}{s} \tfrac{1}{2 M}
                  \Big)
               = \frac{s-1}{s} \frac{1}{2M} \cdot 1_{1 \times s}
               \in \R^{1 \times s} .
    \end{split}
    \label{eq:AuxiliaryMatrices}
  \end{equation}
  We note that all entries of these matrices and vectors are elements of $[-1,1]$.
  Using these matrices and vectors, we now define
  \begin{equation}
    \begin{split}
      W^1 & := \frac{c}{(3 s r)^{1/q}}
                  \Big(
                    \underbracket{
                      I_s / 2 \big| I_s \big| 0_{d \times s}
                      \big| \cdots \big|
                      I_s / 2 \big| I_s \big| 0_{d \times s}
                    }_{r \text{ blocks of } (I_s / 2 \mid I_s \mid 0_{d \times s})}
                    \big| 0_{d \times (B - 3 r s)}
                  \Big)^T
               \in \R^{B \times d} , \\
      b^1 & := \frac{c}{(3 s r)^{1/q}}
               \Big(
                 \underbracket{
                   \alpha \mid \beta \mid \gamma \mid \cdots \mid \alpha \mid \beta \mid \gamma
                 }_{r \text{ blocks of } (\alpha \mid \beta \mid \gamma)}
                 \mid 0 \mid \cdots \mid 0
               \Big)^T
               \in \R^B,
    \end{split}
    \label{eq:A1B1Definition}
  \end{equation}
  and furthermore
  \begin{align*}
    W^2 & \!:=\! \frac{c}{(3 s r B)^{1/q}}
             \Big(
               \underbracket{
                 1_{B \times s} \mid - 1_{B \times s} \mid - 1_{B \times s}
                 \mid \cdots \mid
                 1_{B \times s} \mid - 1_{B \times s} \mid - 1_{B \times s}
                }_{r \text{ blocks of } (1_{B \times s} \mid - 1_{B \times s} \mid - 1_{B \times s})}
                \mid 0_{B \times (B - 3 r s)} \!
             \Big)
             \!\in\! \R^{B \times B} \!,\!\! \\
    b^2 & := (0 \mid \cdots \mid 0)^T \in \R^B ,
  \end{align*}
  where we note that $B - 3 r s \geq 0$ since $r = \lfloor B / (3 s) \rfloor$.
  It is straightforward to verify that
  $\| W^1 \|_{\ell^q}, \| W^2 \|_{\ell^q}, \| b^1 \|_{\ell^q}, \| b^2 \|_{\ell^q} \leq c$.
  Furthermore, we define
  \[
    W^i := \frac{c}{B^{2/q}} 1_{B \times B}
    \quad \text{and} \quad
    b^i := (0 | \cdots | 0)^T \in \R^B
    \qquad \text{for } 3 \leq i \leq L - 1,
  \]
  and finally $W^L := \frac{\nu \cdot c}{B^{1/q}} (1 | \cdots | 1) \in \R^{1 \times B}$
  and $b^L := (0) \in \R^1$.
  Again, it is straightforward to verify that $\| W^i \|_{\ell^q} , \| b^i \|_{\ell^q} \leq c$
  for $3 \leq i \leq L - 1$ and also that $\| W^L \|_{\ell^q}, \| b^L \|_{\ell^q} \leq c$.
  Therefore, setting $\Phi := ( (W^1, b^1), \dots, (W^L, b^L) )$, we have
  $R(\Phi) \in \mathcal{H}_{(d,B,\dots,B,1),c}^q$; it thus remains to verify that
  $R(\Phi) = \nu \cdot \frac{\lambda}{M s} \cdot \vartheta_{M,y}^{(s)}$ for a constant
  $\lambda$ as in the statement of the lemma.

  To see this, we note for any $x \in \R^d$ and $j \in [d]$ that
  \begin{equation}
    \begin{split}
      \varrho \big( \tfrac{x_j}{2} + \tfrac{M^{-1} - y_j}{2} \big)
          - \varrho (x_j - y_j) &= \tfrac{1}{2} \varrho \big( x_j - y_j + M^{-1} \big)
          - \varrho (x_j - y_j) \\
      &= \begin{cases}
            0  & \text{if } x_j \leq y_j - M^{-1} \\[0.2cm]
            \frac{1}{2M} \cdot (1 - M \cdot |x_j - y_j|) & \text{if } y_j - M^{-1} < x_j \leq y_j \\[0.2cm]
            \frac{1}{2M} \cdot (1 - M \cdot |x_j - y_j|) & \text{if } x_j > y_j
          \end{cases} \\
      &= \tfrac{1}{2M} \Lambda_{M,y_j} (x_j).
    \end{split}
    \label{eq:BuildingLambdaUsingReLUs}
  \end{equation}
  For notational convenience we further define $\phi^i (x) \coloneqq \varrho(W^i x + b^i)$ for $i\in [L]$. Then, we observe for $x\in\R^B$ and $i\in[B]$ that
  \begin{align*}
    [\phi^2(x)]_i &= \frac{c}{(3 r s B)^{1/q}}
        \sum_{b=0}^{r-1}
          \sum_{j=1}^s
            \! \Big(
              x_{3 s b + j}
              - x_{3 s b + s + j}
              - x_{3 s b + 2 s + j}
            \Big).
    \end{align*}
   Therefore, we see for arbitrary $x \in \R^d$ and $i \in [B]$ that
    \begin{align*}
    \left[\big(\phi^2 \circ \varrho \circ \phi^1\big)(x)\right]_i &= \frac{c^2}{(3 r s)^{2/q} B^{1/q}}
        \sum_{b=0}^{r-1}
          \sum_{j=1}^s
            \Big(
              \varrho \big( \tfrac{x_j}{2} + \tfrac{M^{-1} - y_j}{2} \big)
              - \varrho (x_j - y_j)
              - \varrho \big( \tfrac{s-1}{s} \tfrac{1}{2M} \big)
            \Big) \\
    & = \frac{c^2}{2 M (3 r s)^{2/q} B^{1/q}}
        \sum_{b=0}^{r-1}
          \sum_{j=1}^s
            \Big(
              \Lambda_{M, y_j} (x_j)
              - \tfrac{s-1}{s}
            \Big) \\
    & = \frac{c^2 r}{2 M (3 r s)^{2/q} B^{1/q}}
        \Delta_{M,y}^{(s)} (x).
  \end{align*}
  Hence, it holds that
  \[
    \big(\varrho \circ \phi^{2} \circ \varrho \circ \phi^{1} \big)(x)
    = \frac{c^2 r}{2 M (3 r s)^{2 / q} B^{1/q}}
      \cdot \vartheta_{M,y}^{(s)} (x)
      \cdot (1 | \cdots | 1)^T
    \in \R^B .
  \]
  Next, for $3 \leq i \leq L-1$, we see for arbitrary $\kappa \geq 0$ and $j \in [B]$ that
  \[
    \bigl[\big(\varrho \circ \phi^i\big) \bigl(\kappa \cdot (1|\cdots|1)^T \bigr)\bigr]_j
    = \varrho \big( {\textstyle \sum_{\ell=1}^B} [W^i]_{j,\ell} \, \kappa \big)
    = \varrho (c B^{1 - \frac{2}{q}} \kappa)
    = c B^{1 - \frac{2}{q}} \kappa ,
  \]
  meaning
  \[
    \big(\varrho \circ \phi^i\big) \bigl(\kappa (1 \mid \cdots \mid 1)^T \bigr)
    = c B^{1 - \frac{2}{q}} \kappa \cdot (1 \mid \cdots \mid 1)^T
    .
  \]
  Therefore, we conclude
  \[
    \big(\varrho \circ \phi^{L-1} \circ \varrho \circ \phi^{L-2} \circ \cdots \circ \varrho \circ \phi^1\big)(x)
    = \frac{c^{L-1} \, r \, (B^{1 - \frac{2}{q}})^{L-3}}{2 M (3 r s)^{2 / q} B^{1/q}}
      \vartheta_{M,y}^{(s)} (x)
      \cdot (1 \mid \cdots \mid 1)^T
    \in \R^B
    .
  \]
  All in all, this easily implies
  \begin{align*}
    R(\Phi) (x)
    = \frac{\nu}{B^{1/q}}
        \sum_{i=1}^B
          \frac{c^L r \, (B^{1 - \frac{2}{q}})^{L-3}}{2 M (3 r s)^{2/q} B^{1/q}}
          \vartheta_{M,y}^{(s)} (x) 
    = \nu
        \cdot \frac{c^L \, (B^{1 - \frac{2}{q}})^{L-2} (3 r s)^{1 - \frac{2}{q}}}{6 M s}
          \vartheta_{M,y}^{(s)} (x) .
  \end{align*}
  It therefore remains to recall that $r = \lfloor B / (3 s) \rfloor \geq 1$, so that
  $2 r \geq 1 + r > \frac{B}{3 s}$ and hence $3 r s \geq \frac{B}{2}$.
  Since also $1 - \frac{2}{q} \geq 0$, this implies
  $(3 r s)^{1 - \frac{2}{q}} \geq (B/2)^{1 - \frac{2}{q}} \geq B^{1 - \frac{q}{2}} / 2$,
  which finally shows
  \[
    \lambda
    := \frac{c^L \, (B^{1 - \frac{2}{q}})^{L-2} (3 r s)^{1 - \frac{2}{q}}}{6}
    \geq \frac{c^L \cdot B^{(L-1)(1 - \frac{2}{q})}}{12}
    .
    \qedhere
  \]
\end{proof}

Now, we also consider the case $q \leq 2$.
We remark that in the case $q = 2$, the next lemma only agrees with
Lemma~\ref{lem:ImplementingBadFunctionsDeepBigQ} up to a constant factor.
This is a proof artifact and is inconsequential for the questions we are interested in.

\begin{lemma}\label{lem:ImplementingBadFunctionsDeepSmallQ}
  Let $d \in \N$, $L \in \N_{\geq 3}$, $B \in \N_{\geq 3}$, $c > 0$, $q \in [1,2]$, and
  $s \in \N$ with $s \leq \min \{ d, \frac{B}{3} \}$.
  Then, we have
  \[
    \nu \cdot \frac{c^L s^{1 - \frac{2}{q}} / (2 \cdot 3^{2/q})}{M s} \vartheta_{M,y}^{(s)}
    \in \mathcal{H}_{(d,B,\dots,B,1),c}^q
    \quad \forall \, M \in \N, \nu \in \{ \pm 1 \}, \text{ and } y \in [0,1]^d
    ,
  \]
  where the $B$ in $(d,B,\dots,B,1)$ appears $L-1$ times.
\end{lemma}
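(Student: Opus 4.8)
The plan is to reduce the case $q \le 2$ to the already-proven case $q \ge 2$ (specifically Lemma~\ref{lem:ImplementingBadFunctionsDeepBigQ} at the endpoint $q = 2$), using the elementary fact that for coefficient tensors with a controlled number of nonzero entries, the $\ell^q$ norm is comparable to the $\ell^2$ norm. Concretely, if a matrix or vector has at most $k$ nonzero entries, then $\|\cdot\|_{\ell^2} \le \|\cdot\|_{\ell^q}$ for $q \le 2$, but more usefully in the direction we need, $\|\cdot\|_{\ell^q} \le k^{1/q - 1/2} \|\cdot\|_{\ell^2}$. So the strategy is: take the network constructed in the proof of Lemma~\ref{lem:ImplementingBadFunctionsDeepBigQ} for $q = 2$, which realizes $\frac{\lambda_2}{Ms}\vartheta_{M,y}^{(s)}$ with $\lambda_2 \ge c^L / 12$ (the $q=2$ value, since $(B^{1-2/q})^{L-1} = 1$ there) and whose weight matrices $\|W^i\|_{\ell^2}, \|b^i\|_{\ell^2} \le c$, and re-examine how sparse those matrices are.

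First I would recall, from the construction behind Lemma~\ref{lem:ImplementingBadFunctionsDeepBigQ}, the explicit sparsity pattern: the network implementing $\vartheta_{M,y}^{(s)}$ essentially computes $s$ one-dimensional hat functions $\Lambda_{M,y_i}(x_i)$ in parallel, sums them with a shift, and applies a final ReLU. The first layer has width $B$ but only uses $s \le B/3$ coordinates of the input, each hat needing a bounded (say $2$ or $3$) number of neurons, so each of its rows and columns has $O(1)$ nonzeros; subsequent layers are either sparse "transport" layers or a single summation row. Thus every weight matrix $W^i$ and bias $b^i$ has at most $O(s)$ (in fact at most $3s$, or a small constant multiple) nonzero entries, with the constant depending only on the fixed gadget, not on $B$, $d$, or $L$. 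I would state this as a short claim with the precise constant read off from the earlier construction.

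Given the sparsity bound, say at most $3s$ nonzeros per weight object, I would argue as follows. Rescale the $q=2$ network so that its $\ell^2$ weight bound is $c' := c \cdot (3s)^{1/2 - 1/q}$ rather than $c$; since $q \le 2$ we have $1/2 - 1/q \le 0$, so $c' \le c$, hmm — that is the wrong direction. Let me instead go the correct way: start from a $q=2$ network with $\ell^2$-bound $\tilde c$ realizing $\frac{\tilde c^L/12}{Ms}\vartheta$, where $\tilde c$ is a free parameter I will choose. Each weight matrix then has $\ell^q$ norm at most $(3s)^{1/q - 1/2}\,\tilde c$. To make this $\le c$, choose $\tilde c = c \cdot (3s)^{1/2 - 1/q}$. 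Then the realized function has prefactor $\tilde c^L/12 = c^L (3s)^{L(1/2-1/q)}/12$, which is not quite the claimed $c^L s^{1-2/q}/(2\cdot 3^{2/q})$ — the exponent on $s$ is off. So the reduction from the generic $q=2$ construction is too lossy, and a more careful, direct construction is needed for small $q$: one wants the $s^{1-2/q}$ factor, meaning only the first layer (which spreads across $s$ input coordinates) should absorb the sparsity penalty, while the deeper layers stay essentially $1$-sparse per row so they contribute no $s$-power. Concretely, redo the gadget so that in layers $2,\dots,L$ each row has a bounded number of nonzeros independent of $s$ (a summation can be arranged as a chain of width-$2$ additions over $\lceil\log s\rceil$ extra layers, or absorbed into the existing $L-1$ hidden layers since $L\ge 3$), so that only $W^1$ (with $\le 3s$ nonzeros, since it reads $s$ coordinates through $O(1)$-neuron gadgets, actually $\le 3s$ total) pays the factor $(3s)^{1/q-1/2}$ under $\ell^q \leftrightarrow \ell^2$, and all other layers pay only an absolute constant.

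The main obstacle, then, is bookkeeping the constants in this refined construction so that the final prefactor is exactly $\ge c^L s^{1-2/q}/(2\cdot 3^{2/q})$ with the claimed normalization. The plan for the last step: in the $q=2$ template, layer $1$ can have $\ell^2$-norm as large as $\sqrt{3s}$ while layers $2,\dots,L$ have $\ell^2$-norm $O(1)$; re-scaling each layer to its $\ell^q$ budget $c$ multiplies the output by $\prod_i (c/\|W^i\|_{\ell^q})$. With $\|W^1\|_{\ell^q} \le (3s)^{1/q-1/2}\sqrt{3s} = (3s)^{1/q} $... — one tracks that $\|W^1\|_{\ell^q}\le (3s)^{1/q}$ and $\|W^i\|_{\ell^q}\le$ const for $i\ge2$, so the realized hat picks up a total factor $c^L \cdot (3s)^{-1/q} \cdot (\text{const})^{-(L-1)} \cdot (\text{base prefactor})$, and one chooses the base so the product telescopes to $c^L s^{1-2/q}/(2\cdot3^{2/q})$. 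I would carry the constant $3^{2/q}$ explicitly: it arises precisely from the three-neuron-per-coordinate first-layer gadget raised to the $\ell^q$/$\ell^2$ conversion. I expect the argument to occupy a couple of pages in the appendix, mostly routine once the sparsity of the $q=2$ gadget and the $\ell^q$–$\ell^2$ comparison are in hand; the only genuinely delicate point is ensuring the deep layers contribute no power of $s$, which forces the summation in the gadget to be implemented sparsely rather than by one dense row of width $s$. Full details are deferred to Appendix~\ref{app:lower}.
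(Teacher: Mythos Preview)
Your plan has a genuine gap at the ``refined construction'' step. You propose to arrange that only $W^1$ has $\Omega(s)$ nonzero entries while layers $2,\dots,L$ ``stay essentially $1$-sparse per row so they contribute no $s$-power.'' But the quantity that must be controlled is $\|W^i\|_{\ell^q}$, which depends on the \emph{total} number of nonzeros, not the number per row. After layer~1 has produced the (at least $2s$) ReLU pieces needed for the $\Lambda_{M,y_i}$, \emph{some} subsequent layer must take their signed sum to form $\Delta_{M,y}^{(s)}$; that layer necessarily has $\Omega(s)$ nonzero entries, so its $\ell^q$ norm is $\Omega(s^{1/q})$ for unit-size entries. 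Spreading the sum over a binary tree does not help: the first level of any tree summing $n$ inputs touches all $n$ of them and thus still has $n$ nonzeros, and in any case $L$ may equal $3$, leaving no room. (Inserting ReLUs between partial \emph{signed} sums would also change the function.) In fact your reading of the target exponent is the source of the confusion: since $\dfrac{s^{1-2/q}}{M s} = \dfrac{s^{-2/q}}{M}$, the factor $s^{1-2/q}$ encodes \emph{two} layers each contributing $(3s)^{-1/q}$, not one.

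The paper does not reduce to the $q=2$ case at all; it gives a short direct construction. Layer~1 has at most $3s$ nonzero entries of magnitude $\le 1$, scaled by $c/(3s)^{1/q}$; layer~2 has a single nonzero row with $3s$ entries $\pm 1$, likewise scaled by $c/(3s)^{1/q}$, which outputs $\frac{c^2}{2M(3s)^{2/q}}\,\vartheta_{M,y}^{(s)}(x)$ in its first coordinate; and for $3\le i\le L$ the matrix $W^i$ has exactly one nonzero entry equal to $c$ (or $\nu c$ for $i=L$), so these layers just multiply that first coordinate by $c$. All $\ell^q$ constraints are then immediate, and the realization equals $\nu\cdot \dfrac{c^L}{2M(3s)^{2/q}}\,\vartheta_{M,y}^{(s)} = \nu\cdot \dfrac{c^L s^{1-2/q}/(2\cdot 3^{2/q})}{Ms}\,\vartheta_{M,y}^{(s)}$. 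No $\ell^q$--$\ell^2$ comparison or sparsity-counting argument is needed.
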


\begin{proof}
The proof idea is similar to the one of Lemma~\ref{lem:ImplementingBadFunctionsDeepBigQ}. However, we only realize a scaled version of the function $\vartheta_{M,y}^{(s)}$ in the \emph{first} coordinate of the outputs after the first two layers. 
As in the proof of Lemma~\ref{lem:ImplementingBadFunctionsDeepBigQ},
we denote by $(e_1,\dots,e_d)$ the standard basis of $\R^d$,
and we write $0_{k \times n}$ and $1_{k \times n}$ for the $k \times n$ matrices
which have all entries equal to zero or one, respectively.
Moreover, we use the matrices and vectors $I_s,\alpha,\beta,\gamma$
defined in Equation~\eqref{eq:AuxiliaryMatrices}.
With this setup, define
  \begin{align*}
    W^1
    & := \frac{c}{(3 s)^{1/q}} \cdot
         \big(
           I_s / 2 \,\big|\, I_s \,\big|\, 0_{d \times (B - 2 s)}
         \big)^T
    \in \R^{B \times d}, \\
    b^1 & := \frac{c}{(3 s)^{1/q}}
             \cdot \big(
                     \alpha \,\big|\, \beta \,\big|\, \gamma \,\big|\, 0_{1 \times (B - 3 s)}
                   \big)^T
           \in \R^B .
  \end{align*}
  Note that these definitions make sense since $2 s \leq 3 s \leq B$.
  Further, define $b^2 := (0 | \cdots | 0)^T \in \R^B$ and
  \[
    W^2 := \frac{c}{(3 s)^{1/q}}
           \left(
             \begin{matrix}
               1_{1 \times s}     & - 1_{1 \times 2s}    & 0_{1 \times (B - 3 s)} \\
               0_{(B-1) \times s} & 0_{(B-1) \times 2 s} & 0_{(B-1) \times (B - 3 s)}
             \end{matrix}
           \right)
        \in \R^{B \times B}
    .
  \]
  Next, for $3 \leq i \leq L - 1$, define
  $b^i := (0 | \cdots | 0)^T \in \R^B$ and
  \[
    W^i := c \cdot \left(
                     \begin{matrix}
                       1      & 0      & \cdots & 0 \\
                       0      & 0      & \cdots & 0 \\
                       \vdots & \vdots & \ddots & \vdots \\
                       0      & 0      & \cdots & 0
                     \end{matrix}
                   \right)
        \in \R^{B \times B}
    ,
  \]
  and finally let $W^L := \nu \cdot c \cdot (1 | 0 | \cdots | 0) \in \R^{1 \times B}$
  and $b^L := (0) \in \R^1$.
  It is straightforward to verify that $\| W^j \|_{\ell^q} \leq c$ and $\| b^j \|_{\ell^q} \leq c$
  for all $1 \leq j \leq L$.
  Therefore, $R(\Phi) \in \mathcal{H}_{(d,B,\dots,B,1),c}^q$ for
  $\Phi := ( (W^1,b^1),\dots,(W^L, b^L))$.
  It therefore remains to show that
  $R(\Phi) = \nu \cdot \frac{c^L s^{1 - \frac{2}{q}} / (2 \cdot 3^{2/q})}{M s} \vartheta_{M,y}^{(s)}$. 
  
  For notational convenience we define $\phi^i (x) \coloneqq \varrho(W^i x + b^i)$ for $i\in [L]$. Then we note for $3 \leq i \leq L-1$ that $\left(\varrho \circ \phi^i\right) (x) = \bigl(c \cdot \varrho(x_1) \,|\, 0 \,|\, \cdots \,|\, 0\bigr)^T$.
  This easily implies
  \[
    \big(\varrho \circ \phi^{L-1} \circ \varrho \circ \phi^{L-2} \circ \cdots \circ \varrho \circ \phi^3 \big) (x)
    = \bigl(c^{L-3} \cdot \varrho(x_1) \,|\, 0 \,|\, \cdots \,|\, 0\bigr)^T
    ,
  \]
  and therefore
  \[
    \big(\phi^{L} \circ \varrho \circ \phi^{L-1} \circ \cdots \circ \varrho \circ \phi^3 \big) (x)
    = \nu \cdot c^{L-2} \, \varrho (x_1)
    \quad \text{for } x \in \R^B
    .
  \]
  Finally, an application of Equation~\eqref{eq:BuildingLambdaUsingReLUs} shows that
  \begin{align*}
    [(\varrho \circ \phi^2 \circ \varrho \circ \phi^1)(x)]_1 
    & = \frac{c^2}{(3 s)^{2/q}}
        \varrho
        \bigg(
          \sum_{i=1}^s
            \bigg(
              \varrho\bigl(\tfrac{x_i}{2} + \tfrac{M^{-1} - y_i}{2}\bigr)
              - \varrho(x_i - y_i)
              - \varrho\bigl(\tfrac{s-1}{s}\tfrac{1}{2M}\bigr)
            \bigg)
        \bigg) \\
    & = \frac{c^2}{2 M \cdot (3 s)^{2/q}}
        \varrho \bigg( \bigg(\sum_{i=1}^s \Lambda_{M,y_i}(x_i)\bigg) - (s-1) \bigg) \\
    & = \frac{c^2}{2 M \cdot (3 s)^{2/q}}
        \varrho (\Delta_{M,y}^{(s)} (x))
      = \frac{c^2}{2 M \cdot (3 s)^{2/q}}
        \vartheta_{M,y}^{(s)} (x)
    .
  \end{align*}
  Overall, we thus see as claimed that
  \[
    R(\Phi)(x)
    = \nu \cdot c^{L-2} \cdot \frac{c^2}{2 M \cdot (3 s)^{2/q}} \cdot \vartheta_{M,y}^{(s)} (x)
    = \nu
      \cdot \frac{c^L s^{1 - \frac{2}{q}} / (2 \cdot 3^{2/q})}{M s}
      \cdot \vartheta_{M,y}^{(s)} (x)
    .
    \qedhere
  \]
\end{proof}

\begin{remark}
  A straightforward adaptation of the proof shows that the same statement holds
  for $\mathcal{H}_{(d,B,N_2,\dots,N_{L-1},1),c}^q$ instead of
  $\mathcal{H}_{(d,B,\dots,B,1),c}^q$, for arbitrary $N_2,\dots,N_{L-1} \in \N$.
\end{remark}

\subsection{A general lower bound}
\label{sub:GeneralLowerBound}

We now show that any target class containing a large number of (shifted) hat functions has a large optimal error.

\begin{theorem}\label{thm:GeneralLowerBound}
  Let $d,m \in \N$, $s \in[d]$, and $M := 8 \lceil m^{1/s} \rceil$.
  Assume that $U \subset C([0,1]^d)$ satisfies
  \begin{equation}
  \label{eq:bad_functions}
    u_0 + \nu \cdot \frac{\lambda}{M s} \vartheta^{(s)}_{M,y}
    \in U
    \qquad \forall \, \nu \in \{ \pm 1 \} \text{ and } y \in [0,1]^d
  \end{equation}
  for certain $\lambda > 0$ and $u_0 \in C([0,1]^d)$. Then,
  \[
    \mathrm{err}_m^{MC} (U, L^p([0,1]^d))
    \geq \frac{\lambda / 4}{(32 s)^{1 + \frac{s}{p}}} \cdot m^{-\frac{1}{p} - \frac{1}{s}}
    \qquad \forall \, p \in [1,\infty]
    .
  \]
\end{theorem}

The general idea of the proof is sketched in Section~\ref{sub:MainResultProof}. In what follows we provide the technical details.
\begin{proof}
The proof is divided into five steps.
\medskip{}

{\bf Step 1: }
Define $k\coloneqq\lceil m^{1/s}\rceil$ and let $y^{\ell}\coloneqq \frac{(1,\dots ,1)}{8 k}+\frac{\ell - (1,\dots , 1)}{4 k}\in [0,1]^d$ for $\ell\in [4 k]^d$.
Furthermore, let
$\Gamma \coloneqq [4 k]^s \times \{(1,\dots,1)\} \subset [4 k]^d$.
With
\begin{equation}\label{eq:fnu}
    f_{\ell,\nu}
    \coloneqq u_0 + \nu \cdot \frac{\lambda}{M s}\vartheta_{M,y^{\ell}}^{(s)}
    \quad \mbox{for}\quad (\ell,\nu) \in \Gamma \times \{\pm 1\},
\end{equation}
it holds by assumption that
\begin{equation}
  f_{\ell,\nu}\in U \quad \forall \, (\ell,\nu) \in \Gamma \times \{\pm 1\}.
  \label{eq:SpecialFunctionsBelongToU}
\end{equation}
Furthermore, since $M = 8 k$, Lemma~\ref{lem:BumpProperties} and a moment's thought reveal that
\begin{equation}\label{eq:supportintersection}
    \forall \ (\ell,\nu),(\ell',\nu') \in \Gamma \times \{\pm 1\}: \
      \ell \neq \ell'
      \Rightarrow \interior{\mathrm{supp}(f_{\ell,\nu}-u_0)}
                  \cap \interior{\mathrm{supp}(f_{\ell',\nu'}-u_0)}
                  = \emptyset,
\end{equation}
where we note that $\mathrm{supp}(f_{\ell,\nu}-u_0) = \supp \vartheta_{M,y^\ell}^{(s)}$.

{\bf Step 2: }
Let\footnote{For notational convenience, we abbreviate $L^p([0,1]^d)$ by $L^p$ in this proof.} $A\in \mathrm{Alg}_{2 m}(U,L^p)$ be arbitrary
and $\mathbf{x} = \mathbf{x}(u_0) = (x_1,\dots , x_{2 m})\in \left([0,1]^d\right)^{2m}$
as described before Equation~\eqref{eq:points}.
Put
\begin{equation}
  I_{\mathbf{x}}
  \coloneqq \left\{
              \ell \in \Gamma
              : \,
              \forall \, i\in [2 m]:\ \vartheta_{M,y^\ell}^{(s)}(x_i) = 0
            \right\}.
\end{equation}
We now show that
\begin{equation}\label{eq:Isize}
  |I_{\mathbf{x}}|\geq (4 k)^s - 2 m.
\end{equation}
To see this we will estimate the cardinality of the complement set
$ I_{\mathbf{x}}^c \coloneqq \Gamma \setminus I_{\mathbf{x}}$
from above.
For $\ell\in I_{\mathbf{x}}^c$ there must exist $i_\ell \in [2 m]$
with $\vartheta_{M,y^\ell}^{(s)}(x_{i_\ell}) \neq 0$ and hence
${x_{i_\ell} \in \interior{\supp \big(\vartheta_{M,y^\ell}^{(s)}\big)}}$.
The map $I_{\mathbf{x}}^c \to [2 m], \, \ell\mapsto i_\ell$, is thus injective
due to \eqref{eq:supportintersection}.
Therefore $|I_{\mathbf{x}}^c| \le 2 m$ and thus ${|I_{\mathbf{x}}| \geq |\Gamma| - 2 m}$,
which is \eqref{eq:Isize}.
Furthermore, the definition of $I_{\mathbf{x}}$,
combined with the definition of $f_{\ell,\nu}$ in \eqref{eq:fnu} and the condition that $A$ can only depend on the samples $\mathbf{x}$ and the values of the input function at these samples, directly imply that
\begin{equation}\label{eq:algequal}
    \forall \, (\ell,\nu) \in \Gamma \times \{\pm 1\}: \,
      \ell\in I_{\mathbf{x}}\Rightarrow  A(f_{\ell,\nu}) = A(u_0).
\end{equation}

{\bf Step 3: }
Recalling our notation for the average in Section~\ref{sec:notation}, it holds that
\begin{align}
    \hspace{-1.5em}\avsum_{\substack{\ell\in\Gamma \\ \nu\in \{\pm 1\}} }\|f_{\ell,\nu} - A(f_{\ell,\nu})\|_{L^p} 
    &= \frac{1}{(4 k)^s}
           \sum_{\ell\in \Gamma}
           \bigg(
               \frac{1}{2}
               \|f_{\ell,-1} - A(f_{\ell,-1})\|_{L^p}
               + \frac{1}{2} \|f_{\ell,1} - A(f_{\ell,1})\|_{L^p}
           \bigg)
           \label{eq:imp1}
           \\
    &\geq \frac{1}{(4 k)^s}
           \sum_{\ell\in I_{\mathbf{x}}}
           \bigg(
             \frac{1}{2} \|f_{\ell,-1} - A(f_{\ell,-1})\|_{L^p}
             + \frac{1}{2} \|f_{\ell,1} - A(f_{\ell,1})\|_{L^p} 
           \bigg)
           \label{eq:imp2}
           \\
     &= \frac{|I_{\mathbf{x}}|}{(4 k)^s}
           \avsum_{\ell\in I_{\mathbf{x}}}
           \bigg(
             \frac{1}{2} \|f_{\ell,-1} - A(f_{\ell,-1})\|_{L^p}
             + \frac{1}{2} \|f_{\ell,1} - A(f_{\ell,1})\|_{L^p}
           \bigg)
           \label{eq:imp3}
           \\
    &\geq \frac{1}{2}
           \avsum_{\ell\in I_{\mathbf{x}}}
           \bigg(
            \frac{1}{2} \|f_{\ell,-1} - A(f_{\ell,-1})\|_{L^p}
            + \frac{1}{2} \|f_{\ell,1} - A(f_{\ell,1})\|_{L^p}
           \bigg)
           \label{eq:imp4}
           \\
    &= \frac{1}{2}
           \avsum_{\ell\in I_{\mathbf{x}}}
           \bigg(
             \frac{1}{2} \|f_{\ell,-1} - A(u_0)\|_{L^p}
             + \frac{1}{2} \|f_{\ell,1} - A(u_0)\|_{L^p}
           \bigg)
           \label{eq:imp5}
           \\
    &\geq \frac{1}{2}
           \avsum_{\ell\in I_{\mathbf{x}}}
             \left\|
               \frac{\lambda}{M s}\vartheta_{M,y^{\ell}}^{(s)}
             \right\|_{L^p}
             \label{eq:imp6}
           \\
    &\geq \frac{1}{2} \cdot (2s)^{-\frac{s}{p}-1} \cdot \lambda
           \cdot M^{-\frac{s}{p}-1}
           \label{eq:imp7}
           \\
    &\geq  \frac{1}{2} \cdot (2s)^{-\frac{s}{p}-1}
           \cdot \lambda
           \cdot {16}^{-\frac{s}{p}-1}
           \cdot m^{-\frac{1}{p}-\frac{1}{s}}  \label{eq:imp8}\\
    &=      \frac{\lambda}{2 \cdot (32 s)^{\frac{s}{p} + 1}}
           \cdot m^{-\frac{1}{p}-\frac{1}{s}}.
\end{align}
Here, \eqref{eq:imp2} follows since $I_{\mathbf{x}}\subset \Gamma$;
\eqref{eq:imp4} follows from $k = \lceil m^{\frac{1}{s}} \rceil$ and \eqref{eq:Isize};
\eqref{eq:imp5} follows from \eqref{eq:algequal};
\eqref{eq:imp6} follows from the triangle inequality and \eqref{eq:fnu};
\eqref{eq:imp7} follows from Lemma~\ref{lem:BumpProperties};
and \eqref{eq:imp8} follows from the definition of $M$, which implies
that $M \leq 8 \, m^{1/s} + 8 \leq 16 \, m^{1/s}$.

{\bf Step 4: }
Let $(\mathbf{A},\mathbf{m}) \in \mathrm{Alg}_m^{MC}(U,L^p)$ be arbitrary
with $\mathbf{A}= (A_\omega)_{\omega\in \Omega}$
for a probability space $(\Omega,\mathcal{F},\mathbb{P})$.
Put $\Omega_0\coloneqq\{\omega\in \Omega:\ \mathbf{m}(\omega)\le 2m\}$.
Since the Markov inequality implies that
\[
    m\geq \mathbb{E}[\mathbf{m}]\geq 2m \cdot \mathbb{P}(\Omega_0^c),
\]
it follows that
\begin{equation}\label{eq:prob}
    \mathbb{P}(\Omega_0)\geq \frac12.
\end{equation}

{\bf Step 5: }
We finally estimate for $(\mathbf{A}, \mathbf{m})$ as in Step~4 that
\begin{align}
    \sup_{u\in U}
        \mathbb{E}
        \left[
          \|u - A_\omega(u)\|_{L^p}
        \right] 
    &\geq  \avsum_{\substack{\ell\in\Gamma \\ \nu\in \{\pm 1\}}}
             \mathbb{E}
             \left[
               \|f_{\ell,\nu} - A_\omega(f_{\ell,\nu})\|_{L^p}
             \right]
      \label{eq:impp1}\\
    &\geq \mathbb{E}
           \left[
             \indicator_{\Omega_0}(\omega)
             \avsum_{\substack{\ell\in\Gamma \\ \nu\in \{\pm 1\}}}
               \|f_{\ell,\nu} - A_\omega(f_{\ell,\nu})\|_{L^p}
           \right] \label{eq:impp3}\\
    &\geq \mathbb{P}(\Omega_0)
           \cdot \frac{\lambda}{2 \cdot (32 s)^{\frac{s}{p} + 1}}
           \cdot m^{-\frac{1}{p}-\frac{1}{s}}
           \label{eq:impp4}\\
    &\geq \frac{\lambda/4}{(32 s)^{\frac{s}{p} + 1}} \cdot m^{-\frac{1}{p}-\frac{1}{s}}
           .
    \label{eq:impp5}
\end{align}
Here, \eqref{eq:impp1} follows from \eqref{eq:SpecialFunctionsBelongToU};
\eqref{eq:impp4} follows from Step~3 (note that $A_\omega \in \mathrm{Alg}_{2m}(U,L^p)$ for $\omega \in \Omega_0$);
and \eqref{eq:impp5} follows from \eqref{eq:prob}.

Since $(\mathbf{A},\mathbf{m}) \in \mathrm{Alg}_m^{MC}(U,L^p)$ was arbitrary, this implies the desired statement.
\end{proof}

\begin{remark}
Close inspection of the proof of Theorem~\ref{thm:main} shows that one can replace the point samples $u(x_i)$ by $T u (x_i)$, where $T : U \to C([0,1]^d)$ is any local operator\footnote{This means that if $f = g$ on a neighborhood of $x \in [0,1]^d$, then $(T f)(x) = (T g)(x)$.}.
Since any differential operator is a local operator, our lower bounds also hold if we measure point samples of a differential operator applied to $u$, as it is commonly done in the context of so-called physics-informed neural networks \citep{raissi2019physics}.   
\end{remark}

\section{Proof of the upper bound in Section \ref{sub:ProofUpperBound}}

We first provide an auxiliary result which bounds the spectral norm $\| W \|_{\ell^2 \to \ell^2}$ of a matrix $W$ by its entry-wise $\ell^q$ norm.
\label{app:upper}
\begin{lemma}\label{lem:LqMatrixEntriesOperatorNorm}
  Let $W \in \R^{N \times M}$ and $ q \in [1, \infty]$.
  Then it holds that
  \[
    \| W \|_{\ell^2 \to \ell^2}
    \leq \begin{cases}
           \| W \|_{\ell^q}                                     & \text{if } q \leq 2 \\[0.2cm]
           (\sqrt{NM})^{1 - \frac{2}{q}} \cdot \| W \|_{\ell^q} & \text{if } q \geq 2.
         \end{cases}
  \]
\end{lemma}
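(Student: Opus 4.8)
The plan is to bound the operator norm $\|W\|_{\ell^2 \to \ell^2}$ by interpolating between two endpoint cases and then relate the $\ell^q$-norm of the matrix entries to these endpoints. First I would record the two easy facts. For $q = 2$, the operator norm is the spectral norm, which is dominated by the Frobenius norm $\|W\|_{\ell^2} = \|W\|_{\ell^2 \to \ell^2}^{HS}$, so $\|W\|_{\ell^2 \to \ell^2} \le \|W\|_{\ell^2}$; this is the standard inequality between spectral and Hilbert--Schmidt norms. For $q = \infty$, one has the crude bound $\|W\|_{\ell^2 \to \ell^2} \le \sqrt{NM} \cdot \|W\|_{\ell^\infty}$, which follows e.g.\ from $\|W\|_{\ell^2 \to \ell^2}^2 \le \|W\|_{\ell^1 \to \ell^1} \cdot \|W\|_{\ell^\infty \to \ell^\infty}$ together with the fact that each absolute row sum is at most $M \|W\|_{\ell^\infty}$ and each absolute column sum is at most $N \|W\|_{\ell^\infty}$, giving $\|W\|_{\ell^2 \to \ell^2}^2 \le NM \|W\|_{\ell^\infty}^2$.

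Next I would treat the case $q \le 2$: here $\|W\|_{\ell^q} \ge \|W\|_{\ell^2}$ by monotonicity of $\ell^q$-norms in $q$ (applied to the vector of all entries of $W$), so combining with the spectral-vs-Frobenius bound gives $\|W\|_{\ell^2 \to \ell^2} \le \|W\|_{\ell^2} \le \|W\|_{\ell^q}$, which is exactly the claim. For the case $q \ge 2$, I would interpolate. Write $\theta \in [0,1]$ with $\frac{1}{q} = \frac{\theta}{2} + \frac{1-\theta}{\infty} = \frac{\theta}{2}$, i.e.\ $\theta = \frac{2}{q}$. The map $W \mapsto W$ viewed as an operator $\ell^2 \to \ell^2$ can be controlled by complex (Riesz--Thorin-style) interpolation between the $q=2$ and $q=\infty$ endpoints applied to the entries: more concretely, one uses the elementary inequality, valid for the vector of entries, that $\|W\|_{\ell^2} \le \|W\|_{\ell^q}^{\theta} \cdot \|W\|_{\ell^\infty}^{1-\theta}$ with $\theta = 2/q$ (this is just Hölder / log-convexity of $\ell^p$-norms), and separately $\|W\|_{\ell^\infty} \le \|W\|_{\ell^q}$. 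Then
\begin{align*}
  \|W\|_{\ell^2 \to \ell^2}
  &\le \|W\|_{\ell^2}
   \le \|W\|_{\ell^q}^{2/q} \cdot \|W\|_{\ell^\infty}^{1 - 2/q}.
\end{align*}
This still is not quite the claimed bound, so instead I would interpolate the operator norm directly rather than the Frobenius norm.

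The cleaner route, which I expect to be the actual argument: apply the bound $\|W\|_{\ell^2\to\ell^2}\le\|W\|_{\ell^2}$ after splitting, or — better — note that for any $r \in [2,\infty]$ one has the single uniform estimate $\|W\|_{\ell^2 \to \ell^2} \le (\sqrt{NM})^{1 - 2/r} \|W\|_{\ell^r}$ obtained by Riesz--Thorin interpolation between the bound $\|W\|_{\ell^2\to\ell^2}\le\|W\|_{\ell^2}$ (the $r = 2$ endpoint) and $\|W\|_{\ell^2 \to \ell^2} \le \sqrt{NM}\,\|W\|_{\ell^\infty}$ (the $r = \infty$ endpoint). Concretely, with $\theta = 2/q$ so that $\frac1q=\frac\theta2$, write $|W_{i,j}| = |W_{i,j}|^{\theta}\cdot|W_{i,j}|^{1-\theta}$ and apply Hölder in the pair of Schatten/operator bounds; since this is a routine interpolation I would just invoke it. The main obstacle, and the only place requiring care, is getting the geometric factor $(\sqrt{NM})^{1-2/q}$ sharp: one must make sure the crude $q=\infty$ bound is exactly $\sqrt{NM}$ and not, say, $\max(N,M)$, which is why I would derive it via $\|W\|^2_{\ell^2\to\ell^2} \le \|W\|_{\ell^1\to\ell^1}\|W\|_{\ell^\infty\to\ell^\infty} \le (M\|W\|_{\ell^\infty})(N\|W\|_{\ell^\infty})$ rather than through a norm-equivalence shortcut. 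Everything else is monotonicity and log-convexity of $\ell^p$-norms of the entry vector.
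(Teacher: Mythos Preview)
Your proposal is correct and follows essentially the same route as the paper: for $q\le 2$ combine monotonicity of entrywise $\ell^q$-norms with the spectral-vs-Frobenius bound, and for $q\ge 2$ interpolate via Riesz--Thorin (applied to the linear map $W\mapsto Wx$ for fixed $x$) between the $q=2$ and $q=\infty$ endpoints. The only cosmetic difference is in the $q=\infty$ endpoint: the paper obtains $\|W\|_{\ell^2\to\ell^2}\le\sqrt{NM}\,\|W\|_{\ell^\infty}$ directly via Cauchy--Schwarz (equivalently, $\|W\|_F\le\sqrt{NM}\,\|W\|_{\ell^\infty}$), whereas you route it through $\|W\|_{\ell^2\to\ell^2}^2\le\|W\|_{\ell^1\to\ell^1}\|W\|_{\ell^\infty\to\ell^\infty}$; your abandoned log-convexity detour can simply be dropped.
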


\begin{proof}
  We first note that $\| W \|_{\ell^2} = \| W \|_F$, the Frobenius norm of the matrix $W$.
  It is well-known that the Frobenius norm satisfies $\| W \|_{\ell^2 \to \ell^2} \leq \| W \|_F$.
  Since we could not locate a convenient reference, we reproduce the elementary proof:
  The Cauchy-Schwarz inequality implies that
  \[
    \| W x \|_{\ell^2}^2
    = \sum_{i=1}^N
      \bigg|
        \sum_{j=1}^M
          W_{i,j} x_j
      \bigg|^2
    \leq \sum_{i=1}^N
         \bigg(
           \sum_{j=1}^M |W_{i,j}|^2
           \sum_{j=1}^M |x_j|^2
         \bigg)
    =    \| W \|_{\ell^2}^2 \cdot \| x \|_{\ell^2}^2 ,
  \]
  which implies the claim.
  Thus, we see for $q \leq 2$ that
  $\| W \|_{\ell^2 \to \ell^2} \leq \| W \|_{\ell^2} \leq \| W \|_{\ell^q}$.
  Clearly, the same estimate holds for complex-valued matrices and vectors as well.

  Now, to handle the case $q \geq 2$, we first note for $q = \infty$ and
  $W \in \CC^{N \times M}$ and $x \in \CC^{M}$ that
  \[
    \| W x \|_{\ell^2}^2
    = \sum_{i=1}^N
      \bigg|
        \sum_{j=1}^M
          W_{i,j} \, x_j
      \bigg|^2
    \leq \sum_{i=1}^N
         \bigg(
           \sum_{j=1}^M
             |W_{i,j}|^2
           \sum_{j=1}^M
             |x_j|^2
         \bigg)
    \leq \| x \|_{\ell^2}^2 \cdot \| W \|_{\ell^\infty}^2 \cdot N M .
  \]
  This proves the claim in case of $q = \infty$.
  Finally, for $q \in (2,\infty)$, we choose $\theta = \frac{2}{q}$, so that
  $\frac{1}{q} = \frac{\theta}{2} + \frac{1-\theta}{\infty}$.
  Thus, applying the \emph{Riesz-Thorin interpolation theorem}~\citep[see, e.g.,][Theorem~6.27]{FollandRealAnalysis}
  to the linear map
  $(\CC^{N \times M}, \| \cdot \|_{\ell^q}) \to (\CC^{N}, \| \cdot \|_{\ell^2}), \, W \mapsto W x$,
  shows for each $x \in \CC^M$ that
  \[
    \| W x \|_{\ell^2}
    \leq (\sqrt{NM})^{1 - \theta} \cdot \| W \|_{\ell^q}
    =    (\sqrt{NM})^{1 - \frac{2}{q}} \cdot \| W \|_{\ell^q}
    ,
  \]
  which completes the proof\footnote{We consider complex matrices and vectors, since the Riesz-Thorin theorem applies
  as stated only for the complex setting.}.
\end{proof}

Next, let us define the Lipschitz constant $\mathrm{Lip}_{\ell^q} (\phi)$ of a function $\phi : \R^d \to \R^k$ with respect to the $\ell^2$ norm by
\begin{equation}
  \mathrm{Lip}_{\ell^q} (\phi)
  := \sup_{x,y \in \R^d, x \neq y}
       \frac{\|\phi(x) - \phi(y)\|_{\ell^q}}{\| x - y \|_{\ell^q}} .
  \label{eq:LipschitzConstantDefinition}
\end{equation}
Note that the Lipschitz constant of an affine-linear mapping $x\mapsto Wx + b$ equals the spectral norm $\| W \|_{\ell^2 \to \ell^2}$. Thus, we can use the previous lemma to bound the Lipschitz constant of neural network realizations $R(\Phi) \in \mathcal{H}_{(N_0,\dots,N_L),c}^q$ in terms of their architecture $(N_0,\dots,N_L)$ and the regularization on their weights (given by $ \max_{1 \leq i \leq L} \max \{ \|W^i\|_{\ell^q}, \|b^i\|_{\ell^q} \}\le c$).

\begin{lemma}\label{lem:LipschitzContinuityForNN}
  Let $L \in \N$, $q \in [1,\infty]$, $c > 0$, and $N_0,\dots,N_L \in \N$.
  Then, each $R (\Phi) \in \mathcal{H}_{(N_0,\dots,N_L),c}^q$ satisfies
  \[
    \mathrm{Lip}_{\ell^2} ( R(\Phi) )
    \leq \begin{cases}
           c^{L}                                                                & \text{if } q \leq 2 \\[0.2cm]
           c^{L} \cdot (\sqrt{N_0 N_L} \cdot N_{1} \cdots N_{L-1})^{1 - 2 / q}  & \text{if } q \geq 2 .
         \end{cases}
  \]
\end{lemma}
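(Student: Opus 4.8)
The plan is to bound the Lipschitz constant of $R(\Phi)$ layer by layer, using the fact that a composition of Lipschitz maps has Lipschitz constant at most the product of the individual Lipschitz constants, and that the ReLU nonlinearity $\varrho$ (applied componentwise) is $1$-Lipschitz with respect to any $\ell^p$-norm. Thus $\mathrm{Lip}_{\ell^2}(R(\Phi))$ is at most the product over $i \in [L]$ of the operator norms $\|W^i\|_{\ell^2 \to \ell^2}$ of the weight matrices (the affine shift by $b^i$ does not affect the Lipschitz constant). So the whole statement reduces to estimating $\prod_{i=1}^L \|W^i\|_{\ell^2 \to \ell^2}$ in terms of $\|W^i\|_{\ell^q}$ and the layer widths.

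First I would record the elementary fact that for a matrix $W \in \R^{n \times k}$ the spectral norm is controlled by the entrywise $\ell^q$-norm. For $q \le 2$: since the spectral norm is dominated by the Frobenius norm, $\|W\|_{\ell^2 \to \ell^2} \le \|W\|_{\ell^2} \le \|W\|_{\ell^q}$, where the last inequality is monotonicity of $\ell^q$-norms in $q$ (on a fixed finite index set, $\|\cdot\|_{\ell^q}$ decreases as $q$ increases). For $q \ge 2$: again $\|W\|_{\ell^2 \to \ell^2} \le \|W\|_{\ell^2}$, and now I bound the Frobenius norm in terms of $\|W\|_{\ell^q}$ by Hölder's inequality applied to the $nk$ entries: $\|W\|_{\ell^2} = \big(\sum_{i,j} |W_{i,j}|^2\big)^{1/2} \le (nk)^{\frac12 - \frac1q} \big(\sum_{i,j}|W_{i,j}|^q\big)^{1/q} = (nk)^{\frac12 - \frac1q}\|W\|_{\ell^q}$ (with the obvious interpretation when $q = \infty$). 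Since $\|W^i\|_{\ell^q} \le c$ for all $i$ by the definition of $\mathcal{H}^q_{(N_0,\dots,N_L),c}$, in the case $q \le 2$ this immediately gives $\mathrm{Lip}_{\ell^2}(R(\Phi)) \le c^L$.

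For the case $q \ge 2$, multiplying the per-layer bounds gives
\[
  \mathrm{Lip}_{\ell^2}(R(\Phi))
  \le \prod_{i=1}^L (N_i N_{i-1})^{\frac12 - \frac1q}\, c
  = c^L \cdot \Big(\prod_{i=1}^L N_i N_{i-1}\Big)^{\frac12 - \frac1q}.
\]
It then remains to check the arithmetic identity $\prod_{i=1}^L N_i N_{i-1} = N_0 N_L \cdot (N_1 \cdots N_{L-1})^2$: each interior width $N_j$ for $j \in [L-1]$ appears exactly twice (once as $N_{i-1}$ with $i = j+1$ and once as $N_i$ with $i = j$), while $N_0$ and $N_L$ appear once each. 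Raising to the power $\frac12 - \frac1q$ turns the squared interior product into $(N_1 \cdots N_{L-1})^{1 - 2/q}$ and the $N_0 N_L$ into $(\sqrt{N_0 N_L})^{1 - 2/q}$, yielding exactly the claimed bound.

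There is no serious obstacle here; the argument is a routine chaining of standard norm inequalities. The only points requiring a little care are (i) making sure the $q = \infty$ endpoint is handled correctly in the Hölder step (where $(nk)^{1/2 - 1/q}$ becomes $(nk)^{1/2}$), and (ii) the bookkeeping of how many times each width factor occurs in $\prod_i N_i N_{i-1}$, which is what produces the asymmetric $\sqrt{N_0 N_L}$ versus $N_1 \cdots N_{L-1}$ structure. I would present the per-layer operator-norm bound as a short standalone observation and then conclude in two lines.
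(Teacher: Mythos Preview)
Your proposal is correct and follows the same overall architecture as the paper's proof: decompose $R(\Phi)$ as a composition of affine maps and ReLUs, use that ReLU is $1$-Lipschitz in $\ell^2$, and reduce to bounding $\prod_{i=1}^L \|W^i\|_{\ell^2\to\ell^2}$ via a per-layer estimate on the operator norm in terms of the entrywise $\ell^q$-norm. The bookkeeping of the product $\prod_i N_i N_{i-1}$ is exactly what the paper needs as well.

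The one genuine difference is in how the per-layer bound for $q\ge 2$ is obtained. You go through the Frobenius norm and then apply H\"older's inequality to the $N_i N_{i-1}$ entries to get $\|W\|_{\ell^2}\le (N_i N_{i-1})^{1/2-1/q}\|W\|_{\ell^q}$. The paper instead proves the same inequality $\|W\|_{\ell^2\to\ell^2}\le(\sqrt{N_iN_{i-1}})^{1-2/q}\|W\|_{\ell^q}$ by establishing the endpoint cases $q=2$ and $q=\infty$ directly and then invoking the Riesz--Thorin interpolation theorem applied to the linear map $W\mapsto Wx$. Your route is more elementary and self-contained (no complex interpolation, no need to pass to $\CC$); the paper's route is perhaps more conceptual in that it isolates the endpoint cases. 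Both yield the identical constant, so nothing is lost either way.
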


\begin{proof}
Let $R (\Phi) \in \mathcal{H}_{(N_0,\dots,N_L),c}^q$ be arbitrary.
  By definition, this means
  \[
    R(\Phi)
    = \phi^L \circ \varrho \circ \phi^{L-1} \circ \cdots \circ \varrho \circ \phi^1
    ,
  \]
  where $\varrho$ acts componentwise, and where the affine-linear maps
  $\phi^i : \R^{N_{i-1}} \to \R^{N_i}$ are of the form $\phi^i(x) = W^i x + b^i$,
  with $W^i \in \R^{N_i \times N_{i-1}}$ and $\| W^i \|_{\ell^q} \leq c$.

  The ReLU activation function $\varrho : \R \to \R$, $x \mapsto \max \{ 0, x \}$,
  is easily seen to satisfy $|\varrho(x) - \varrho(y)| \leq |x-y|$ for $x,y \in \R$. This implies that
  \begin{equation}
  \label{eq:lip_product}
    \mathrm{Lip}_{\ell^2} ( R(\Phi) ) = \mathrm{Lip}_{\ell^2} (\phi^L \circ \varrho \circ \phi^{L-1} \circ \cdots \circ \varrho \circ \phi^1)
    \leq \prod_{i=1}^L \mathrm{Lip}_{\ell^2} (\phi^i)
    .
  \end{equation}
Lemma~\ref{lem:LqMatrixEntriesOperatorNorm} establishes for $i\in [L]$ that
  \[
    \mathrm{Lip}_{\ell^2} (\phi^i)
    \leq \begin{cases}
           c                                              & \text{if } q \leq 2 \\[0.2cm]
           c \cdot (\sqrt{N_{i-1} N_i})^{1 - \frac{2}{q}} & \text{if } q \geq 2,
         \end{cases}
  \]
  which, together with~\eqref{eq:lip_product}, proves the claim.
\end{proof}

Note that we can estimate the error of reconstructing Lipschitz continuous functions from samples by piecewise constant interpolation. Together with Lemma~\ref{lem:LipschitzContinuityForNN}, this allows us to construct a (non-adaptive, deterministic) algorithm for reconstructing neural networks from samples.

\begin{lemma}\label{lem:LipschitzRecovery}
  Let $d\in\N$. Then, for every $m \in \N$, there exist points $x_1,\dots,x_m \in [0,1]^d$ and a map
  \({
    \Theta_m :
    \R^m \to L^\infty ([0,1]^d)
  }\)
  satisfying
  \begin{equation}
    \big\| \Theta_m \bigl(u(x_1),\dots,u(x_m)\bigr) - u \big\|_{L^\infty([0,1]^d)}
    \leq \mathrm{Lip}_{\ell^2} (u) \cdot 2 \sqrt{d} \cdot m^{-1/d}
    \label{eq:LipschitzReconstruction}
  \end{equation}
  for every function $u : [0,1]^d \to \R$ with $\mathrm{Lip}_{\ell^2}(u) < \infty$.
\end{lemma}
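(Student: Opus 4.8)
The plan is to construct an explicit grid of sample points and use nearest-neighbor (piecewise constant) interpolation, then bound the error by the Lipschitz constant times the mesh size of the grid. Concretely, I would let $n := \lceil m^{1/d} \rceil$ and place the points $x_1,\dots,x_{n^d}$ at the centers of the $n^d$ congruent subcubes obtained by partitioning $[0,1]^d$ into a uniform grid with $n$ subdivisions per coordinate axis; if $n^d > m$ one can either discard the surplus (at the cost of a slightly larger but still $O(m^{-1/d})$ constant) or simply note that recovering from $n^d \geq m$ samples is only easier, so it suffices to treat $m$ of the form $n^d$ and then absorb the rounding. Each subcube has side length $1/n \le m^{-1/d}$ wait -- more carefully, $1/n = 1/\lceil m^{1/d}\rceil \le m^{-1/d}$, and its $\ell^2$-diameter is $\sqrt{d}/n$.

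Next I would define $\Theta_m : \R^m \to L^\infty([0,1]^d)$ by sending a data vector $(v_1,\dots,v_m)$ to the function that is constant equal to $v_j$ on the (half-open) subcube $Q_j$ whose center is $x_j$, for each $j$; this is a well-defined bounded measurable function and the subcubes tile $[0,1]^d$ up to a null set, so one may make an arbitrary choice on the boundary set. For the error estimate, fix $u$ with $\mathrm{Lip}_{\ell^2}(u) =: \ell < \infty$ and fix $x \in [0,1]^d$. Then $x$ lies in some $Q_j$, and since $x_j$ is the center of $Q_j$ we have $\|x - x_j\|_{\ell^2} \le \tfrac12 \cdot \tfrac{\sqrt d}{n}$, hence
\[
  \bigl| \Theta_m(u(x_1),\dots,u(x_m))(x) - u(x) \bigr|
  = |u(x_j) - u(x)|
  \le \ell \cdot \|x_j - x\|_{\ell^2}
  \le \ell \cdot \frac{\sqrt d}{2n}
  \le \ell \cdot \frac{\sqrt d}{2} \cdot m^{-1/d}.
\]
Taking the supremum over $x$ gives the claimed bound $\mathrm{Lip}_{\ell^2}(u)\cdot 2\sqrt d \cdot m^{-1/d}$ with room to spare (the factor $2$ rather than $1/2$ comfortably absorbs any loss from the $n^d \ge m$ rounding, e.g. $n = \lceil m^{1/d}\rceil \le 2 m^{1/d}$ when we instead use $n^d$ points but want the exponent in terms of $m$; in fact using $m$ itself as the number of points and a grid with $\lfloor m^{1/d}\rfloor$ subdivisions still works because $\lfloor m^{1/d}\rfloor \ge \tfrac12 m^{1/d}$ for $m\ge 1$, and then each subcube has diameter $\le \sqrt d / \lfloor m^{1/d}\rfloor \le 2\sqrt d\, m^{-1/d}$).

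There is essentially no serious obstacle here — the statement is a soft covering argument — so the only thing requiring a little care is the bookkeeping around the number of points: the grid with $k$ subdivisions per axis uses $k^d$ points, and we want this to be at most $m$ while keeping $1/k = O(m^{-1/d})$. The clean choice is $k := \lfloor m^{1/d}\rfloor \ge 1$, which uses $k^d \le m$ points (pad with repeated points if one insists on exactly $m$), and satisfies $1/k \le 2 m^{-1/d}$ for all $m \in \N$ since $\lfloor t\rfloor \ge t/2$ for $t \ge 1$; the mesh diameter is then $\sqrt d / k \le 2\sqrt d\, m^{-1/d}$, which already gives the stated constant even before halving from the center-point placement. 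I would also remark that the map $\Theta_m$ is linear, that it does not depend on $u$ (the points are non-adaptive), and that it is defined on all of $\R^m$, so it yields a legitimate element of $\mathrm{Alg}_m(U,L^\infty)$ for any $U \subset C([0,1]^d)$ consisting of Lipschitz functions — which is exactly what is needed to feed this lemma, together with Lemma~\ref{lem:LipschitzContinuityForNN}, into the proof of Theorem~\ref{thm:upper_bound}.
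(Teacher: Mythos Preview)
Your proposal is correct and follows essentially the same route as the paper: take $k=\lfloor m^{1/d}\rfloor$, use the $k^d\le m$ grid points (padded to $m$), define $\Theta_m$ as piecewise constant on the associated subcubes, and bound the error by $\mathrm{Lip}_{\ell^2}(u)\cdot\sqrt{d}/k\le 2\sqrt{d}\,m^{-1/d}\cdot\mathrm{Lip}_{\ell^2}(u)$ via $\lfloor t\rfloor\ge t/2$ for $t\ge 1$. The only cosmetic difference is that the paper places the sample points at the corners $\{0,\tfrac1K,\dots,\tfrac{K-1}{K}\}^d$ rather than at cube centers; your center placement buys an extra factor of $\tfrac12$ that the statement does not need.
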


\begin{proof}
 Let $m \in \N$ be arbitrary and choose $K := \lfloor m^{1/d} \rfloor \geq 1$.
  Write
  \[
    \{ x_1,\dots,x_{K^d} \} = \tfrac{(1,\dots,1)}{2K} + \bigl\{ \tfrac{0}{K}, \tfrac{1}{K}, \dots, \tfrac{K-1}{K} \bigr\}^d
    \quad \text{noting that} \quad
    [0,1]^d
    = \bigcup_{i=1}^{K^d}
        x_i + [-\tfrac{1}{2K}, \tfrac{1}{2K}]^d
    .
  \]
  Hence, choosing
  $Q_i := (x_i + [-\tfrac{1}{2K},\frac{1}{2K}]^d) \setminus \bigcup_{j=1}^{i-1} (x_j + [-\tfrac{1}{2K},\frac{1}{2K}]^d)$,
  we get $[0,1]^d = \biguplus_{i=1}^{K^d} Q_i$, where the union is disjoint.

  Note that $K^d \leq m$ and choose arbitrary points $x_{K^d + 1}, \dots, x_m \in [0,1]^d$.
  Furthermore, define
  \[
    \Theta_m : \quad
    \R^m \to L^\infty([0,1]^d), \quad
    (a_1,\dots,a_m) \mapsto \sum_{i=1}^{K^d}
                              a_i \cdot \indicator_{Q_i}
    .
  \]
  To prove Equation~\eqref{eq:LipschitzReconstruction}, let $u : [0,1]^d \to \R$
  be arbitrary with $\mathrm{Lip}_{\ell^2} (u) < \infty$.
  For arbitrary $x \in [0,1]^d$, there then exists a unique $i \in [K^d]$
  satisfying $x \in Q_i \subset x_i + [-\tfrac{1}{2K},\frac{1}{2K}]^d$, and in particular
  $\| x - x_i \|_{\ell^2} \leq \sqrt{d} / (2K)$.
  Therefore,
  \begin{align*}
    \bigl|\Theta_m \big( u(x_1),\dots,u(x_m) \big)(x) - u(x) \bigr|
    & = |u(x_i) - u(x)| \\
    & \leq \mathrm{Lip}_{\ell^2} (u) \cdot \| x_i - x \|_{\ell^2}
      \leq \mathrm{Lip}_{\ell^2} (u) \cdot \frac{\sqrt{d}}{2K}
    .
  \end{align*}
  Since $x \in [0,1]^d$ was arbitrary, this implies
  \[
    \big\| \Theta_m ( u(x_1),\dots,u(x_m)) - u \big\|_{L^\infty([0,1]^d)}
    \leq \mathrm{Lip}_{\ell^2} (u) \cdot \frac{\sqrt{d}}{2K}.
  \]
  Finally, we note that $K = \lfloor m^{1/d} \rfloor \geq 1$
  implies $2 K \geq 1 + K > m^{1/d}$, which proves the claim.
\end{proof}
Note that the proof above requires to convert a Lipschitz constant with respect to the $\ell_2$ norm to an $\ell_\infty$ estimate which costs a factor $\sqrt{d}$ and contributes to the gap between our lower and upper bound.

\begin{remark}\label{rem:upper_vs_lower_bound}
Note that our upper and lower bounds in Theorems~\ref{thm:main_intro} and~\ref{thm:upper_bound_intro} are asymptotically sharp with respect to the number of samples $m$,
the regularization parameter $c$, and the network depth $L$ but not fully sharp with respect to the multiplicative factor depending on $d$ and $q$ only. Given $m$ many samples, a combination of Theorems~\ref{thm:main_intro} and \ref{thm:upper_bound_intro} shows that the optimal achievable $L^\infty$ reconstruction error $\varepsilon$
for reconstructing neural networks with $L$ layers up to width $3 d$
and coefficients bounded by $c$ in the $\ell^q$ norm satisfies
\begin{equation*}
            \begin{cases}
                 \frac{1}{256\cdot 3^{\nicefrac{2}{q}}} \cdot c^L \cdot d^{-\frac{2}{q}} \cdot m^{-\frac1d} &  \\
                 \frac{1}{1536 \cdot d} \cdot c^L \cdot (3d)^{(L-1)(1-\frac{2}{q})}\cdot m^{-\frac1d}
                 & 
               \end{cases} \le 
           \varepsilon
           \le \ \ \begin{rcases}
                 \sqrt{d}\cdot c^L\cdot m^{-\frac1d} & \text{if } q\le 2 \\
                 d^{1-\frac{1}{q}}\cdot c^L\cdot (3d)^{(L-1)(1-\frac{2}{q})}\cdot m^{-\frac1d}
                 & \text{if } q > 2 .
               \end{rcases}
    \end{equation*}
For moderate input dimensions $d$ the upper and lower bounds are quite tight, but for larger $d$ there remains a gap.
However, in that case the lower bound for $m$ is already intractable (at least if $\varepsilon \ll 1/d$ or if $c \gg 1$ and $L$ is large) so that the upper bound is merely of academic interest.
\end{remark}

\newpage
\section{Hyperparameters used in the numerical experiments}

\label{app:hyperparameters}
\begin{table}[ht]
    \centering
    \caption{General hyperparameters for the experiments in Figure~\ref{fig:unstableNN} and Section~\ref{sec:num}.}
    \begin{tabular}{llr}
\toprule
 \textbf{Description} &  \textbf{Value} & \textbf{Variable}  \\
\midrule 
\textbf{Experiment} & &  \\
precision & float64 & \\
GPUs per training & $1$ (NVIDIA GTX-1080, RTX-2080Ti, A40, or A100) \hspace*{-6em}&  \\ 
\textbf{Deep learning algorithms} & & \\
optimizer & Adam & \\
initialization of coefficients $(W^{\ell}, b^{\ell})$ & $\mathcal{U}([-\sqrt{1/N_{\ell-1}}, \sqrt{1/N_{\ell-1}}])$ & \\
activation function & ReLU & $\varrho$ \\
learning rate scheduler & exponential decay & \\
initial / final learning rate & $10^{-4}$ / $10^{-6}$ \\
decay frequency & every epoch & \\
\textbf{Evaluation} & & \\ 
number of samples & $2^{24}$ & $J$ \\
distribution of samples & $\mathcal{U}([-0.5,0.5]^d)$ \\
evaluation norm & $\{1, 2, \infty\}$ & $p$ \\
number of evaluations & $5$ (evenly spaced over all epochs) & \\
\bottomrule
\end{tabular}
    \label{table:hp}
\end{table}
\vspace{-1em}
\begin{table}[ht]
    \centering
    \caption{Hyperparameters specific to the experiment in Figure~\ref{fig:unstableNN}.}
    \begin{tabular}{llr}
\toprule
 \textbf{Description} &  \textbf{Value} & \textbf{Variable}  \\
\midrule 
\textbf{Experiment} & &  \\
samples & $10^3$   & $m$  \\
dimension &  $1$   & $d$ \\
\textbf{Target function} & & \\ 
sinusoidal function & $x \mapsto \log(\sin(50x) + 2) + \sin(5x)$ & $u$ \\
\textbf{Deep learning algorithm} & & \\
depth of architecture  &   $22$ & $L$ \\ 
width of architecture & $50$ &  $N_1, \dots, N_{L-1}$  \\ 
batch-size & $20$ \\
number of epochs & $5000$ & \\
\bottomrule
\end{tabular}
    \label{table:hp_unstable}
\end{table}
\vspace{-1em}
\begin{table}[ht]
    \centering
    \caption{Hyperparameters specific to the experiments in Section~\ref{sec:num}.}
    \begin{tabular}{llr}
\toprule
 \textbf{Description} &  \textbf{Value} & \textbf{Variable}  \\
\midrule 
\textbf{Experiment} & &  \\
samples & $\{10^2, 10^3, 10^4, 10^5\}$   & $m$  \\
dimension &  $\{1, 3\}$   & $d$ \\
\textbf{Target functions} & & \\ 
number of teachers  & $40$  & $|\widehat{U}|$  \\
depth of teacher architecture   & $5$ & $L$ \\ 
width of teacher architecture & $32$ & $N_1, \dots, N_{L-1}$ \\
activation of teacher & ReLU & $\varrho$ \\ 
teacher coefficient norm  & $\infty$ & $q$  \\
teacher coefficient norm bound  & $0.5$ &  $c$ \\
distribution of coefficients & $\mathcal{U}([-0.5,0.5])$ & \\
\textbf{Deep learning algorithms} & & \\
number of seeds & $3$ & $|\widehat{\Omega}|$\\
depth of student architecture  &   $5$ & $L$ \\ 
width of student architecture & $\{32, 512, 2048\}$ &  $N_1, \dots, N_{L-1}$  \\ 
batch-size & $\{100, m/5, m/50\}$ \\
number of epochs & $500$ (if $\text{batch-size}=100$), $5000$ (else) & \\
\bottomrule
\end{tabular}
    \label{table:hp_min_max}
\vspace*{-5em}
\end{table}

\end{document}